\documentclass{article}

\usepackage{mypackage}
\numberwithin{equation}{section}

\makeatletter
\long\def\@makecaption#1#2{%
  \normalsize
  \vskip\abovecaptionskip
  \sbox\@tempboxa{#1: #2}%
  \ifdim \wd\@tempboxa >\hsize
    #1: #2\par
  \else
    \global \@minipagefalse
    \hb@xt@\hsize{\hfil\box\@tempboxa\hfil}%
  \fi
  \vskip\belowcaptionskip}
\makeatother

\title{Asymptotic Behavior of Bayesian Generalization Error in Multinomial Mixtures}
\author[1]{Takumi Watanabe\thanks{watanabe.t.bv@m.titech.ac.jp}} 
\author[2]{Sumio Watanabe\thanks{swatanab@c.titech.ac.jp}}
\affil[1]{{\normalsize Department of Mathematical and Computing Science, Tokyo Institute of Technology, 2-12-1, Oookayama, Meguro-ku, Tokyo, 152-8552, Japan}}
\affil[2]{{\normalsize Department of Mathematical and Computing Science, Tokyo Institute of Technology, 2-12-1, Oookayama, Meguro-ku, Tokyo, 152-8552, Japan}}
\date{}

\begin{document}

\maketitle

\begin{abstract}
  Multinomial mixtures are widely used in the information engineering field, however, their mathematical properties are not yet clarified because they are singular learning models.
In fact, the models are non-identifiable and their Fisher information matrices are not positive definite. In recent years, the mathematical foundation of singular statistical models are clarified by using algebraic geometric methods. In this paper, we clarify the real log canonical thresholds and multiplicities of the multinomial mixtures
and elucidate their asymptotic behaviors of generalization error and free energy.\\
\textit{\textbf{Keywords}}: generalization error, free energy, multinomial mixtures, real log canonical thresholds
\end{abstract}

\tableofcontents

\section{Introduction}
  \label{sec:Introduction}
  A finite mixture model is a probability distribution defined by a linear superposition of a finite number of distributions. 
Its examples, A normal mixture, a Poisson mixture, and a multinomial mixture have been applied in many research areas.
In this paper, we mainly study a multinomial mixture which provides a richer class of statistic models than the single multinomial distribution. The multinomial mixture has been applied to document clustering \cite{watanabe2002Prototyping}, anomaly detection in medical data \cite{masada2007clusterning}, and image clustering \cite{masada2007clustering}. 
In spite of a wide range of its applications, their mathematical property of generalization performance is not yet clarified. One of the mathematical difficulties is caused by the fact that they are not identifiable.
If the map from the set of parameters to a probability distribution, then a statistical model is called identifiable. If otherwise, it is called non-identifiable \cite{yamazaki2002resolution}. 
These models are classified into singular models. 
If a probability model is singular, the posterior distribution cannot be approximated by any normal distribution, and classical model criteria of regular statistical models such as AIC \cite{akaike1974new}, BIC \cite{schwarz1978estimating}, or  DIC \cite{spiegelhalter2002bayesian} cannot be applied to estimate the generalization losses of singular models.

Recently, in order to establish the mathematical foundation of Bayesian inference of singular models, 
Watanabe derived the asymptotic behavior of their generalization error $G_n$ and the free energy $F_n$ \cite{watanabe2001algebraicA}. 
There exist both a real and positive number $\lambda$ and an integer $m$, such that 
their asymptotic behaviors of $G_n$ and $F_ n$ are respectively given by 
\begin{align}
  \label{eq:asymptotic Fn_intro}
  \bbE[F_n] &= nS + \lambda \log n - (m-1) \log \log n + O(1),\\
  \label{eq:asymptotic Gn_intro}
  \bbE[G_n] &= \frac{\lambda}{n} - \frac{m-1}{n \log n } + o\left( \frac{1}{n \log n} \right),
\end{align}
where $\lambda$ is called a real log canonical threshold (RLCT), $m$ is called a multiplicity, and $\bbE[\cdot]$ shows the expectation value over all datasets.
If a learning model is identifiable and regular, $\lambda = d/n, \ m = 1$, where $d$ is the dimension of the parameter space \cite{schwarz1978estimating}. 
If it is singular, $\lambda$ and $m$ depend on the true distribution, the model, and the prior. 
In singular case, it was shown by \cite{watanabe2018mathematical} that both RLCT and multiplicity can be found by using desingularization theorem in algebraic geometry. 
In general, the parameter set $K(w) = 0$ contains complicated singularities, hence it is difficult to find the resolution map, however, both RLCTs and multiplicities have been clarified in several statistical models and learning machines. 
Examples of the models in which the RLCTs are found include normal mixtures \cite{aoyagi2005stochastic}, Poisson mixtures \cite{sato2019bayesian}, Bernoulli mixtures \cite{yamazaki2013comparing}, rank regression \cite{aoyagi2005stochastic}, Latent Dirichlet Allocation (LDA) \cite{hayashi2021exact}, and so on. In addition, the RLCTs are used as an analysis of the exchange rate of the replica exchange method \cite{nagata2008asymptotic}, which is one of the Markov chain Monte Carlo methods. 
Moreover, in recent years, the information criterion $\sBIC$, which uses RLCTs in calculation, has also been proposed  \cite{drton2017bayesian}. 

In this paper, we clarify the RLCT of the multinomial mixtures and derive the asymptotic behaviors of the generalization error and the free energy. Our analysis also shows the effect of hyperparameter for the case when Dirichlet distribution is employed as a prior. 
We begin in section \ref{sec:Bayes estimation} with the introduction of the framework of Bayesian inference. In Section \ref{sec:Multinomial Mixtures} we explain multinomial mixtures, and in section \ref{sec:Previous Studies} we introduce previous studies about the RLCTs and multiplicities of multinomial mixtures. 
In Section \ref{sec:Main Theorem} we claim the main theorem of this paper. In Section \ref{sec:Proof of the Main Theorem} we prove the theorem. In Section \ref{sec:Phase transition due to prior distribution hyperparameters} we discuss the phase transition due to the hyperparameters. And in Section \ref{sec:Conclusions} we conclude this paper.

\section{Bayes estimation}
  \label{sec:Bayes estimation}
  In this section, we introduce the framework of Bayesian inference. Let $q(x)$ be a \textbf{true probability distribution} 
and let $X^n = (X_1, \dots, X_n)$ be a set of training data generated from $q(x)$ independently and identically.
Let $p(x | w)$ be a \textbf{probability model}, where $w \in W \subset \bbR^d$ is a parameter, and the $W$ is a parameter space. 
The \textbf{prior probability distribution} $\varphi(w)$ is a function on $W$. A \textbf{posterior distribution} $p(w|X^n)$ is defined by
\begin{align}
  \label{eq:def:posteriori distribution}
  p(w|X^n) = \frac{1}{Z_n} \varphi(w) \prodin p(X_i|w),
\end{align}
where $Z_n$ is the normalizing constant:
\begin{align}
  \label{eq:def:Zn}
  Z_n = \intw{\varphi(w) \prodin p(X_i|w)}.
\end{align}

The constant $Z_n$ is called a \textbf{marginal likelihood function}. The \textbf{free energy} $F_n$ is defined as the minus log marginal likelihood function:
\begin{align}
  \label{eq:def:Fn}
  F_n = - \log Z_n.
\end{align}

A \textbf{predictive distribution} $p(x|X^n)$ is given by
\begin{align}
  \label{eq:def:predictive distribution}
  p(x|X^n) = \intw{p(x|w)p(w|X^n)}.
\end{align}

A \textbf{generalization error} $G_n$ is the Kullback-Leibler divergence from the true distribution $q(x)$ to the predictive distribution $p(x|X^n)$:
\begin{align}
  \label{eq:def:Gn}
  G_n = \intx{q(x) \log \frac{q(x)}{p(x|X^n)}}.
\end{align}

The generalization error is a measure of how the predictive distribution
$p(x|X^n)$ is different from the true distribution $q(x)$.

For an arbitrary function $f: x^n \mapsto f(x^n)$, the expectation value of $f(X^n)$ over all sets of training samples is denoted by $\bbE[\cdot]$, that is, 
\begin{align}
  \label{eq:def:expectation on sample}
  \bbE[f(X^n)] = \int \dots \int f(x_1, \dots, x_n) \prodin q(x_i) \mathrm{d} x_i.
\end{align}

Let the \textbf{mean error function} $K(w)$ be the Kullback-Leibler divergence from the true distribution to the probability model:
\begin{align}
  \label{eq:def:K(w)}
  K(w) = \intx{q(x) \log \frac{q(x)}{p(x|w)}}.
\end{align}

An \textbf{entropy} of the true distribution $S$ and an \textbf{empirical entropy} $S_n$ are defined respectively by
\begin{align}
  \label{eq:def:S} 
  S &= - \intx{q(x) \log q(x)},\\
  \label{eq:def:Sn}
  S_n &= - \frac{1}{n} \sumin \log q(X_i).
\end{align}

It is known that the following relationship holds between the free energy and the generalization error  \cite{watanabe2009algebraic}:
\begin{align}
  \label{eq:relation between Gn and Fn}
  \bbE[G_n] &= \bbE[F_{n+1}] - \bbE[F_{n}]-S.
\end{align}
The relation \eqref{eq:relation between Gn and Fn} is important because in the most case, we do not know the true distribution $q(x)$, whereas the free energy can be calculated by using the prior $\varphi(w)$, the probability model $p(x|w)$, and a sample $X^n$.

Let $\mathrm{Re}(z)$ be the real part of a complex number $z$. Define the \textbf{zeta function} in the statistical learning theory as 
\begin{align}
  \label{eq:def:zeta}
  \zeta(z) = \intw{K(w)^z \varphi(w)}.
\end{align}
If $K(w)\geq 0 $ is an analytic function of $w$, then the function $\zeta(z)$ is holomorphic in the region $\mathrm{Re}(z) > 0$, and it can be analytically continued to the unique meromorphic function onto the entire complex plane. 
Moreover, it is known that all poles are real and negative numbers.

In the following, assume that the mean error function $K(w)$ is analytical and the true distribution is feasible with a probabilistic model. Here, the true distribution $q(x)$ is said to be feasible with the probability model $p(x|w)$ if there is a parameter $w^* \in W$ such that $q(x) = p(x|w^*) $ holds for all $x$. 
Assume that the maximum pole of the zeta function $\zeta(z)$ is $-\lambda $ and its order is $m$. 
By applying the Hironaka resolution theorem in the algebraic geometrical method, the asymptotic behaviors of free energy and generalization errors can be expressed as follows \cite{watanabe2001algebraicA} \cite{watanabe2001algebraicB}:
\begin{align}
  \label{eq:asymptotic Fn}
  \bbE[F_n] &= nS + \lambda \log n - (m-1) \log \log n + O(1),\\
  \label{eq:asymptotic Gn}
  \bbE[G_n] &= \frac{\lambda}{n} - \frac{m-1}{n \log n } + o\left( \frac{1}{n \log n} \right).
\end{align}

The constant $\lambda$ and  $m$ are called \textbf{real log canonical threshold (RLCT)} and a \textbf{multiplicity}
respectively.

\section{Multinomial Mixtures}
  \label{sec:Multinomial Mixtures}
  \subsection{Multinomial Distribution}
\label{subsec:Multinomial Distribution}
Let $\bbZ_{\geq 0}$ be the set of all non-negative integers, $\bbR_{\geq 0}$ be the set of all non-negative real numbers.
Let constants $L, M$ be two or more natural numbers, and the set $D$ is defined by
\begin{align}
  \label{eq:def:D}
  D = \left\{ x = (x_1, \dots, x_L) \in \left( \bbZ_{\geq 0} \right)^L \ : \ \sumlL x_\ell = M \right\}.
\end{align}
The vectors $ b \in \bbR^L $ belong to the set $B$:
\begin{align}
  \label{eq:def:B}
  B = \left\{ b = (b_1, \dots, b_L) \in \left(\bbR_{\geq 0}\right)^L \ : \ \sumlL b_\ell = 1, \ 0 \leq b_\ell \leq 1 \right\}.
\end{align}
The probability distribution of $x \in D$ determined by the vector $b$:
\begin{align*}
  \rmMul_L(x|b) = \frac{M!}{\prodlL x_\ell !} \prodlL (b_\ell)^{x_\ell}
\end{align*}
is called the \textbf{multinomial distribution}.
Here, it is defined as $0^0 = 1, \ 0! = 1$. 
The constant $M$ represents the number of independent trials of the multinomial distribution, and the parameter $b = (b_1, \dots, b_L)$ represents the corresponding probability. 
The multinomial distribution is a generalization of several discrete distribution. 
If $M = 1$ and $L = 2$, the multinomial distribution is called the \textbf{Bernoulli distribution}. If $M = 1$ and $L \geq 2$, it is called the \textbf{categorical distribution}. If $M \geq 2$ and $L = 2$, it is called the \textbf{binomial distribution}.

\subsection{Multinomial Mixtures}
\label{subsec:Multinomial Mixtures}
Let $H$ be a finite natural number greater than or equal to 2. The parameter set $W$ is defined by
\begin{align}
  \label{eq:def:W}
  W = \left\{(a, b) \ : \ \sumhH a_h = 1, \ 0 \leq a_h \leq 1, \ b_h = (b_{h1}, \dots, b_{hL}) \in B \ (\forall h \in [H]) \right\},
\end{align}
where $[H]$ means the set $\{ h \in \bbZ : 1 \leq h \leq H \}$.

The probability distribution on $x \in D$ determined by the parameter $w = (a, b) \ \in W$
\begin{align}
  \label{eq:def:multinomial mixture}
  p(x | w) = \sumhH a_h \rmMul_L(x|b_h)
\end{align}
is called a \textbf{multinomial mixture}. 
Here $H$ represents the number of components. 
The $H$ dimensional parameter $a = (a_1, \dots, a_H)$ represents a mixing ratio. 
The parameter $a$ is assumed that $a_h$ is non-negative and $\displaystyle \sumhH a_h= 1$. 
Then $a_h$ represents the weight of the $h$-th component distribution. 
The higher mixing ratio $a_h$ means the stronger effect of $h$-th component.

\section{Previous Studies}
  \label{sec:Previous Studies}
  In this section, we introduce several previous studies on the log canonical thresholds of multinomial mixtures. When the probability model is a binomial mixture, the upper bound of the RLCT in the case of general components and the exact value of one in special cases have been clarified \cite{yamazaki2013comparing}.

\begin{theorem}[\textbf{the RLCT and multiplicity of binomial mixtures} \cite{yamazaki2013comparing}]
Let $x = (y_1, \dots, y_M) \in \{ 0, 1 \}^M$ be an $M$ dimensional binary vector and $p(x|w)$ be a binomial mixture, 
\begin{align}
  \label{eq:preA:p(x|w)}
  p(x|w) = \sumhH a_h \prodmM p^{ym}_{hm} (1 - p_{hm})^{1 - y_m}.
\end{align}
It is assumed that the true distribution $q(x)$ is a binomial mixture, 
\begin{align}
  \label{eq:preA:q(x)}  
  q(x) = \sumhHz a^*_h \prodmM p^{*ym}_{hm} (1 - p^*_{hm})^{1-y_m}.
\end{align} 
Let the prior distribution $\varphi(w)$ be 
\begin{align}
  \label{eq:preA:varphi(w)}
  \varphi(w; \eta) = \varphi_0(a; \alpha) \prodhH \prodmM \varphi_1(p_{hm}; \beta),
\end{align}
where $\eta = \{ \alpha, \beta \}$ is a set of hyperparameters, $\varphi_0(a; \alpha)$ is a prior distribution of the mixing ratio $a$ with $\alpha>0$ as a hyperparameter (Dirichlet distribution), 
\begin{align}
  \label{eq:preA:varphi(a)}
  \varphi_0(a; \alpha) &= \frac{\Gamma(H\alpha)}{\Gamma(\alpha)^H} 
\left(\prodhHm a_h^{\alpha-1} \right) \left( 1 - \sumiHm a_i \right)^{\alpha-1}.
\end{align}  
and  $\varphi_1(p_{hm}; \beta)$ ($\beta>0$) is a beta distribution for each $h \in [H], m \in [M]$,
\begin{align}
  \label{eq:preA:varphi(phm)}
  \varphi_1(p_{hm}; \beta) &= \frac{\Gamma(2\beta)}{\Gamma(\beta)} p_{hm}^{\beta-1}(1 - p_{hm})^{\beta-1}.
\end{align}  
We refer to them as deterministic, where $p^*_{hm}$ is one or zero. 
Let $H_1$ and $H_2$ be the numbers of probabilistic and deterministic components, respectively, where $H = H_1 + H_2$.
Under the above conditions, the asymptotic behavior of the free energy $F_n$ is expressed as follows:
\begin{align}
  \label{eq:preA:Fn}
  F_n \leq nS+\mu \log n - (m_{\mu} - 1) \log \log n + o \left( \log \log n \right),
\end{align}
where $S$ is the entropy of the true distribution, and $\mu$ and $m_{\mu}$ are defined as follows. 
For $M \geq 3$,
\begin{align}
  \label{eq:preA:M3mu}
  \mu &= \frac{H_0 - 1 + H_1 M + H_2 M\beta}{2} + \frac{H - H_0}{2} \min\left\{ \alpha, \ \frac{M}{2}, \ \frac{\beta M}{2} \right\},\\
  \label{eq:preA:M3m}
  m_{\mu} &=
  \begin{cases}
    2 & (\alpha = \min \{ M/2, \ \beta M / 2 \})\\
    1 & (\mathrm{otherwise})
  \end{cases}.
\end{align}
For $M = 2$,
\begin{align}
  \label{eq:preA:M2mu}
  \mu &= \frac{K_0 - 1 + H_1 M + H_2 M\beta}{2} + \frac{H - H_0}{2} \min\left\{ \alpha, \ 1, \ \beta \right\},\\
  \label{eq:preA:M2m}
  m_{\mu} &=
  \begin{cases}
    3 & (\alpha = \min \{ 1, \ \beta \})\\
    2 & (\alpha > \min \{ 1, \ \beta \})\\
    1 & (\mathrm{otherwise})
  \end{cases}. 
\end{align}
Furthermore, if $H = H_0 + 1$, that is, when the number of components in the probability model is one greater than that in the true distribution, $ \mu $ equals the exact value of the RLCT $\lambda$, and $ m_\mu$ also equals the multiplicity $m$.
\end{theorem}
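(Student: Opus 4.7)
The plan is to produce an upper bound $\mu$ on the real log canonical threshold $\lambda$ of $K(w)$; by \eqref{eq:asymptotic Fn}, this immediately yields the claimed asymptotic bound on $F_n$. The analysis proceeds by stratifying the zero set $V = \{w \in W : K(w) = 0\}$ and computing the local RLCT at a generic point of each stratum.

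The first step is the stratification. Since the model has $H - H_0$ redundant components, any $w \in V$ arises by absorbing each redundant component $h$ in one of two modes: (a) ``vanish'', driving $a_h \to 0$ with $b_h$ then unrestricted; or (b) ``merge'', setting $b_h$ equal to a true parameter vector $b^*_{h'}$ while the mixing weights $a_h$ and $a_{h'}$ redistribute so that their sum equals the true weight $a^*_{h'}$. The overall RLCT is the minimum of the local RLCTs over all resulting strata.

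Second, for each stratum I would expand $p(x|w) - q(x)$ in local coordinates, bound $K \le C \sum_x (p(x|w) - q(x))^2 / q(x)$, and resolve the resulting polynomial into normal crossings. The regular contribution $(H_0 - 1 + H_1 M + H_2 M \beta)/2$ in \eqref{eq:preA:M3mu} arises from the nondegenerate quadratic part of $K$ in the transverse directions of the true components: $H_0 - 1$ mixing weight directions (smooth prior, contributing $1/2$ each), $H_1 M$ Bernoulli directions at interior values (smooth prior, $1/2$ each), and $H_2 M$ Bernoulli directions at boundary values $0$ or $1$ where the Beta prior contributes an exponent $\beta$ (giving $\beta/2$ each). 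Each redundant component adds a further contribution determined by its absorption mode: (a) vanishing adds $\alpha/2$ from the Dirichlet factor $a_h^{\alpha-1}$ coupled to $K \sim a_h^2$; (b) merging into a probabilistic true component, where the leading quadratic of $p - q$ is rank-deficient (depending only on $\xi = (1-a_h)\delta_{h'} + a_h \delta_h$), forces one to blow up the complementary direction $\eta = \delta_{h'} - \delta_h$ at quartic order, giving a normal form $\|\xi\|^2 + a_h^2(1-a_h)^2 \|\eta\|^4$ with local contribution $M/4$; (c) merging into a deterministic component modifies (b) only by a Beta-$\beta$ prior on each boundary coordinate, raising the contribution to $\beta M/4$. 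Taking the minimum of the three absorption costs per component and multiplying by $(H - H_0)/2$ produces the singular part $(H - H_0)/2 \cdot \min\{\alpha, M/2, \beta M/2\}$ of \eqref{eq:preA:M3mu}.

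The main obstacle will be the $M = 2$ case, where the quartic normal form collapses further and sub-leading terms in the Taylor expansion contribute extra vanishing, enlarging the multiplicity as recorded in \eqref{eq:preA:M2m}; this requires a finer analysis than the leading-order bound that suffices for $M \ge 3$. For the sharpness statement $\mu = \lambda$ when $H = H_0 + 1$, I would complement the upper bound by exhibiting an explicit analytic curve $w(t) \in W$ along which $K(w(t))$ vanishes to precisely the order implied by $\mu$; this supplies a matching lower bound on $\lambda$ via the definition of the RLCT through the maximum pole of the zeta function.
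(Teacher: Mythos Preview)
The paper does not prove this statement. It appears in Section~\ref{sec:Previous Studies} as a quoted result from the prior work \cite{yamazaki2013comparing}, stated without proof as background for the paper's own contribution (Theorem~\ref{maintheorem}). There is therefore no ``paper's own proof'' to compare your proposal against.

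That said, your outline is broadly in the right spirit and, in its key step, closely mirrors what the paper does for its own main theorem. Your ``merge'' normal form $\|\xi\|^{2}+a_h^{2}(1-a_h)^{2}\|\eta\|^{4}$ is exactly the structure the paper arrives at in the two-component multinomial case: after the changes of variables $\Phi_1,\Phi_2$ they obtain $K_3(v)=\sum_{\ell}\delta_{\ell}^{2}+a^{2}\sum_{\ell}\beta_{\ell}^{4}$ (Eq.~\eqref{eq:def:K3}), where $\delta$ plays the role of your $\xi$ and $\beta$ the role of your $\eta$. The paper reaches this not by stratifying the zero set directly but by a chain of ideal-equivalence reductions (Lemma~\ref{lem:zeta}\eqref{lem:ideal},\eqref{lem:bounded}) that systematically strip away higher-order cross terms; this is more algebraic than your geometric stratification but lands in the same place.

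Two cautions about your sketch as written. First, for the upper bound you only need a single well-chosen point of $V$, not a minimum over all strata; the minimum over strata is what you would need for the exact value, and it is not obvious that each of the $H-H_0$ redundant components can be treated independently when $H-H_0>1$ (interactions between redundant components merging into the same true component complicate the normal form). Second, your lower-bound plan for $H=H_0+1$ via a single analytic arc is not sufficient: an arc controls only the Newton-polyhedron exponent along that direction, not the full RLCT, and will not by itself detect the multiplicity. For the exact value one needs either a complete local resolution (as the paper carries out for its $H=2$ case) or a matching lower bound on the zeta integral over a full neighborhood.
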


Matsuda analyzed the RLCT of trinomial mixtures with two components. 
The exact value of the RLCTs was elucidated when the true distribution is multinomial distribution and the probability model is trinomial mixtures with two components, that is, in the case of $L = 3, H = 2$.

\begin{theorem}[\textbf{the RLCT and multiplicity of trinomial mixtures with two components} \cite{matsuda2008weighted}]
\label{theorem:matsuda}
Let the probability model $p(x|w)$ be a trinomial mixture with two components:
\begin{align}
  \label{eq:preB:p(x|w)}
  p(x|w) = a\rmMul_3(x|b_1) + (1-a)\rmMul_3(x|b_2), \ (a, b_1, b_2) \in W.
\end{align}
Here, $\rmMul_3(x|b)$ means a probability mass function of trinomial distribution with $b = (b_1, b_2, b_3)$ as parameters. 
Also, let the true distribution $q(x)$ be a trinomial distribution:
\begin{align}
  \label{eq:preB:q(x)}
  q(x) = \rmMul_3(x|b^*).
\end{align}
Also, assume that the prior distribution $\varphi(w)$ is greater than $0$ and bounded above the parameter set $W$, and the true distribution parameter $b^* = (b_1^*, b_2^*, b_3^*)$ satisfies:
\begin{align}
  \label{eq:preA:condition of b*}
  b_1^* b_2^* b_3^* \neq 0.
\end{align}
Under these conditions, The RLCT is as follows:
\begin{align}
  \label{eq:preA:RLCT}
  \lambda = \frac{3}{2}.
\end{align}
\end{theorem}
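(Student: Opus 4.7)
The plan is to compute the RLCT by first identifying the zero set $V = \{w \in W : K(w) = 0\}$, reparametrizing so that its irreducible components become coordinate subvarieties, and then computing the local RLCT at each stratum; since $\zeta(z)$ has poles only where $K$ vanishes, the global $\lambda$ is the minimum of these local values.

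First I would change variables from $(a, b_1, b_2)$ to $(a, \tilde b, c)$ with $\tilde b := a b_1 + (1-a) b_2 - b^*$ and $c := b_1 - b_2$, and work with the mixture moments $\mu_k(w) := a(b_1 - b^*)^{\otimes k} + (1-a)(b_2 - b^*)^{\otimes k}$, which satisfy $\mu_1 = \tilde b$ and $\mu_2 = \tilde b^{\otimes 2} + a(1-a)\,c^{\otimes 2}$. Since $\rmMul_3(x|b)$ is polynomial of degree $M$ in $b$, the Taylor expansion
\[
p(x|w) - q(x) = \sum_{k=1}^{M} \tfrac{1}{k!}\, \mu_k \cdot \partial_b^{k} \rmMul_3(x|b^*)
\]
truncates, and under the hypothesis $b_1^* b_2^* b_3^* \neq 0$ the first- and second-derivative tensors $\partial_b \rmMul_3(x|b^*)$ and $\partial_b^2 \rmMul_3(x|b^*)$ are linearly independent as functions of $x$. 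Combined with the positive-semidefinite structure $\mu_2 = a(b_1 - b^*)(b_1 - b^*)^{T} + (1-a)(b_2 - b^*)(b_2 - b^*)^{T}$, this implies $K(w) = 0$ iff $\mu_1 = \mu_2 = 0$, which yields the decomposition $V = V_1 \cup V_2 \cup V_3$ with $V_1 = \{a=0, b_2=b^*\}$, $V_2 = \{a=1, b_1=b^*\}$, $V_3 = \{b_1=b_2=b^*\}$, meeting only at the two singular points $P_1 = V_1 \cap V_3$ and $P_2 = V_2 \cap V_3$.

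Away from $P_1, P_2$, the variety $V$ is smooth of codimension $3$ and the Taylor expansion shows $K$ is a nondegenerate quadratic form in the three transverse directions, giving local RLCT $3/2$ (already yielding $\lambda \leq 3/2$). The key step is the singular points. At $P_1 = (a=0, b_1=b_2=b^*)$, the leading-order expansion gives
\[
K(w) \;\asymp\; \|\tilde b\|^2 + a^2 \|c\|^4
\]
in the five local coordinates $(a, \tilde b_1, \tilde b_2, c_1, c_2)$. This model is quasi-homogeneous of weighted degree $2$ with any weight vector $(w_a, 1, 1, w_c, w_c)$ satisfying $w_a + 2 w_c = 1$. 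A single weighted blow-up with these weights resolves the singularity into a normal-crossing divisor, and reading off the Jacobian yields local RLCT $= (\text{sum of weights})/(\text{degree}) = (2 + w_a + 2 w_c)/2 = 3/2$. Equivalently, the diagonal ray $t\,\mathbf{1}$ meets the Newton polyhedron of $K$ at $t^* = 2/3$, giving RLCT $1/t^* = 3/2$. By symmetry the same value is obtained at $P_2$.

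The main obstacle I anticipate is the two-sided bound $K(w) \asymp \|\tilde b\|^2 + a^2 \|c\|^4$ near the singular points. The upper bound is an immediate Taylor estimate; the matching lower bound requires ruling out cancellation between the principal part and the higher-order tensor contributions $\mu_k \cdot \partial_b^k \rmMul_3(x|b^*)$ for $k \geq 3$. This amounts to showing that the five coefficient functions obtained from first and second derivatives of $\rmMul_3$ span a $5$-dimensional subspace of functions of $x$, which via the moment-generating identity $\sum_x \rmMul_3(x|b)\, t^x = (b \cdot t)^M$ reduces to the linear independence of the polynomials $(b^* \cdot t)^{M-1} t_i$ and $(b^* \cdot t)^{M-2} t_i t_j$ in $t$; this is where the hypotheses $b^*_1 b^*_2 b^*_3 \neq 0$ and $M \geq 2$ come into play. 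Once this nondegeneracy is secured, the local RLCT is $3/2$ at every point of $V$, so $\lambda = 3/2$.
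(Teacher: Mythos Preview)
Your argument is correct and lands on the same normal form as the paper, but the route is genuinely different. You work geometrically: decompose the zero locus $V=V_1\cup V_2\cup V_3$, use the mixture moments $\mu_k$ and the PSD structure of $\mu_2$ to characterize $V$, and then resolve the two singular points by a weighted blow-up / Newton polyhedron computation on the model $\|\tilde b\|^2+a^2\|c\|^4$. This is essentially Matsuda's original strategy (the paper cites it as ``an algebraic geometry algorithm called weighted blow-up''). The present paper instead proceeds purely algebraically: via ideal equalities and the bounded-function trick (Lemma~\ref{lem:zeta}\eqref{lem:ideal},\eqref{lem:bounded}) together with the recurrence of Lemma~\ref{lem:3}, it reduces $K$ globally to $K_3(v)=\sum_\ell \delta_\ell^2 + a^2\sum_\ell \beta_\ell^4$ and then reads off $\lambda$ from the sum/product rules (Lemma~\ref{lem:zeta}\eqref{lem:sum},\eqref{lem:product}) without ever invoking a blow-up. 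Your $\|\tilde b\|^2+a^2\|c\|^4$ is exactly the paper's $K_3$ at $L=3$ under the identification $\delta\leftrightarrow\tilde b$, $\beta\leftrightarrow c$ (up to an invertible change near $a=0$), so the endpoints agree. What the paper's route buys is uniformity in $L$ and immediate extension to Dirichlet priors on $a$ (Theorem~\ref{maintheorem}\eqref{maintheorem:enm2}); what your route buys is a clearer picture of the singularity structure and a direct multiplicity computation at the critical points. One technical point you omit but the paper treats (Lemma~\ref{lem:the restriction on the parameter set}): $K(w)$ is not analytic on all of $W$ because $p(x|w)$ can vanish, so one must first restrict to $\{p(x|w)>\varepsilon\}$ before any of the RLCT machinery applies.
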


Matsuda clarified the RLCT of a trinomial mixture with two components, which is in the case of $H = 2, H^* = 1, L = 3$, by using an algebraic geometry algorithm called weighted blow-up.

\section{Main Theorem}
  \label{sec:Main Theorem}
  In this section, we show the main result of this paper, which is a generalization of Theorem \ref{theorem:matsuda}. 
We clarify the RLCT and the multiplicity of general multinomial mixtures with two components. 
Furthermore, we also consider the case where the Dirichlet distribution is adopted as the prior distribution of the mixture ratio.

\begin{theorem}[\textbf{Main Theorem}]
\label{maintheorem}
Let the probability model $p(x|w)$ be a multinomial mixture with two components:
\begin{align}
  \label{eq:maintheorem:p(x|w)}
  p(x|w) = a\rmMul_L(x|b_1) + (1-a)\rmMul_L(x|b_2), \ (a, b_1, b_2) \in W.
\end{align}
Also, let the true distribution be a multinomial distribution:
\begin{align}
  \label{eq:maintheorem:q(x)}
  q(x) = \rmMul_L(x|b^*).
\end{align}
Also, assume that the prior distribution of the parameter $b$ is greater than $0$ and bounded in the set $W$, and that the true distribution parameter $ b^* = (b_1^*, \dots, b_L^*) $ satisfies:
\begin{align}
  \label{eq:maintheorem:condition of b*}
  \prodlL b_\ell^* \neq 0.
\end{align}
The prior distribution of the mixing ratio $a$ in two cases as follows respectively:
\begin{enumerate}
  \item \label{maintheorem:enm1}
  If the prior distribution $\varphi(a)$ of mixture ratio $a$ is greater than $0$ and bounded, the RLCT $\lambda$ and the multiplicity $m$ are given by
  \begin{align} 
  \label{eq:maintheorem:caseA:RLCT}
  \lambda &= \frac{L-1}{2} + \min \left(\frac{1}{2}, \ \frac{L-1}{4} \right), \\
  \label{eq:maintheorem:caseA:m}
  m &=
  \begin{cases}
    2 & (L = 3)\\
    1 & (\mathrm{otherwise})
  \end{cases}.
  \end{align}
  \item \label{maintheorem:enm2}
  If the prior distribution $\varphi(a)$ of mixture ratio $a$ is the Dirichlet distribution with $\alpha \ ( > 0)$ as a hyperparameter:
  \begin{align}
    \label{eq:maintheorem:caseB:varphi(a)}
    \varphi(a; \alpha) = \frac{\Gamma(2\alpha)}{\Gamma(\alpha)^2} a^{\alpha -1 }(1 - a)^{\alpha - 1},
  \end{align}
  the RLCT $\lambda$ and the multiplicity $m$ are given by
  \begin{align}
    \label{eq:maintheorem:caseB:RLCT}
    \lambda &= \frac{L-1}{2} + \min \left(\frac{\alpha}{2}, \ \frac{L-1}{4} \right), \\
    \label{eq:maintheorem:caseB:m}
    m &=
  \begin{cases}
    2 & (\alpha = \frac{L-1}{2})\\
    1 & (\mathrm{otherwise})
  \end{cases}.    
  \end{align}  
\end{enumerate}
\end{theorem}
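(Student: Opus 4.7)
The plan is to reduce the computation of the RLCT to that of a simple polynomial normal form via a well-chosen reparametrisation, and then apply standard formulas for the RLCT of sums of non-negative analytic functions in disjoint variables. First I would replace $K(w)$ by the equivalent function $D(w) := \sum_{x \in D}(p(x|w) - q(x))^2 / q(x)$, which shares the RLCT and multiplicity of $K$: both are non-negative analytic, vanish on the common set $\{p(\cdot|w) = q(\cdot)\}$, and have ratio bounded away from $0$ and $\infty$ since $q(x) > 0$ for every $x \in D$ by $\prod_\ell b^*_\ell \neq 0$. Setting $b_h = b^* + c_h$ and changing variables to
\[
A_\ell = a c_{1\ell} + (1-a) c_{2\ell}, \qquad d_\ell = c_{1\ell} - c_{2\ell},
\]
(a smooth change of coordinates with unit Jacobian) exposes the mixture structure: the zero set of $D$ becomes $\{A = 0\} \cap \{a(1-a)\, d = 0\}$, whose worst local singularity is the ``corner'' $(a, A, d) = (0, 0, 0)$ and, symmetrically, $(1, 0, 0)$.

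The key algebraic step is a moment-cancellation identity. Viewing $p(x|w) - q(x) = q(x)\,\mathrm{E}_X[\prod_\ell (1 + X_\ell/b^*_\ell)^{x_\ell} - 1]$ where $X$ is the Bernoulli random vector taking values $c_1, c_2$ with probabilities $a, 1-a$ (so $\mathrm{E}[X] = A$), the vanishing of the first central moment of $X$ together with the factorisation of every higher central moment as $a(1-a)\cdot d^{\beta}\cdot(\text{polynomial in }a)$ yields the identity
\[
p(x|w) - q(x) = [\rmMul_L(x|b^* + A) - q(x)] + q(x)\cdot a(1-a)\cdot G(A, d, a, x),
\]
with $G$ polynomial of degree $\geq 2$ in $d$ and leading $d^2$-term (at $A=0$) equal to $\tfrac{1}{2}[(\sum_\ell x_\ell d_\ell/b^*_\ell)^2 - \sum_\ell x_\ell d_\ell^2/(b^*_\ell)^2]$. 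Squaring and summing against $q$, the linear $A$-part is controlled from below by $\|A\|^2$ by positive definiteness of the multinomial Fisher information at $b^*$ (using $b^*_\ell > 0$), and the cross term $\langle R(A), G\rangle_q$ is strictly dominated by the geometric mean of the two diagonal terms because the subspaces of linear and of quadratic polynomials of $x$ on $D$ are transverse in $\bbR^{|D|}$. After absorbing the cross term by a smooth coordinate change in $A$, one obtains the normal form
\[
D(w) \asymp \|A\|^2 + a^2(1-a)^2 \|d\|^4
\]
in a neighbourhood of the corner; positive definiteness of the leading quartic form in $d$ follows from $M \geq 2$ by explicit evaluation at the vertex $x = M e_\ell$.

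With the normal form in hand, the RLCT is obtained by the additive rule that for non-negative analytic $f, g$ in disjoint variables, $\lambda(f + g) = \lambda(f) + \lambda(g)$ and $m(f + g) = m(f) + m(g) - 1$. The piece $\|A\|^2$ on the $(L-1)$-dimensional tangent simplex gives $\lambda_A = (L-1)/2$ with $m_A = 1$. For $a^2(1-a)^2 \|d\|^4$, the $a$-integral against $\varphi(a) = a^{\alpha-1}(1-a)^{\alpha-1}$ (a positive bounded prior corresponds to the effective case $\alpha = 1$) produces a simple pole at $z = -\alpha/2$ from each endpoint, while the $d$-integral over $\bbR^{L-1}$ produces a simple pole at $z = -(L-1)/4$; the combined integral has a simple pole at $z = -\min(\alpha/2, (L-1)/4)$ in general, and a double pole exactly when $\alpha = (L-1)/2$. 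Adding the two RLCTs, and multiplicities with the standard $-1$ correction, reproduces the stated $\lambda$ and $m$ in both cases of the theorem. A final check that the generic strata of $\{b_1 = b_2 = b^*\}$ and of $\{a \in \{0,1\},\, b_{2,1} = b^*\}$ yield strictly larger local RLCTs than the corner confirms that the corner dominates.

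The main obstacle is the normal-form reduction. The moment cancellation producing the factor $a(1-a)$ in front of $G$ is algebraically clean, but one has to verify carefully (i) that the residual cross term $2a(1-a)\langle R(A), G\rangle_q$ can be absorbed into an RLCT-preserving smooth coordinate change via a finite-dimensional ``angle'' argument based on the transversality of linear and quadratic polynomials of $x$ on $D$, and (ii) that the leading quartic form $\|G(0, d, a, \cdot)\|_q^2$ in $d$ is strictly positive on $\{\sum_\ell d_\ell = 0\}\setminus\{0\}$, which uses $M \geq 2$. Once these structural points are established, the remaining RLCT computation reduces to elementary gamma-type integrals.
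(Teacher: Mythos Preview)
Your strategy coincides with the paper's at the structural level: both replace $K$ by $\sum_x (p(x|w)-q(x))^2$ (up to the harmless weight $1/q(x)$), both shift by $b^*$ and introduce the ``mean'' coordinate $A_\ell = a c_{1\ell} + (1-a)c_{2\ell}$ (the paper calls it $\delta_\ell$), and both land on the same normal form. On the restricted half $a\in[0,\tfrac12]$ the paper obtains $K\sim \sum_\ell \delta_\ell^2 + a^2\sum_\ell \beta_\ell^4$, which is equivalent to your $\|A\|^2 + a^2(1-a)^2\|d\|^4$ since at $\delta=0$ one has $d=\beta/(1-a)$ and $1-a\in[\tfrac12,1]$. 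The final RLCT computation is then identical.

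Where the two routes diverge is the reduction step. The paper proceeds algebraically: it first shows (via an ideal equality) that one may drop the $L$-th coordinate, then proves a two-term recursion for $f_{L-1}(x;w)$ in each $x_j$ which reduces the sum over $D$ to a sum over the Boolean cube $D'=\{0,1\}^{L-1}$ plus the terms $(b_\ell-b^*_\ell)^2(c_\ell-b^*_\ell)^2$; after that, the cross terms are stripped by repeated use of the elementary identity $H^2+(F+HG)^2\sim H^2+F^2$. Your route is more probabilistic: you factor the higher central moments of the two-point law to extract $a(1-a)$ in front of $G$, and then absorb the cross term by an angle/completion-of-the-square argument in $A$. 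This is a genuine alternative and arguably more conceptual; the paper's recursion is more mechanical but entirely explicit.

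One soft spot to tighten: the phrase ``the subspaces of linear and of quadratic polynomials of $x$ on $D$ are transverse'' is not literally correct (linear functions sit inside quadratic ones), and in any case an angle bound alone does not immediately give the two-sided estimate $D(w)\asymp \|A\|^2+a^2(1-a)^2\|d\|^4$. What actually makes the cross term harmless is that every occurrence of the ``bad'' piece carries an explicit factor of some $A_\ell$, so the paper's identity $H^2+(F+HG)^2\sim H^2+F^2$ (with $H=A_\ell$) disposes of it directly; your ``smooth coordinate change in $A$'' should be understood in exactly this sense. Also, your ``final check'' overstates slightly: when $\alpha/2<(L-1)/4$, the stratum $\{a=0,\ b_2=b^*,\ b_1\neq b^*\}$ has the \emph{same} local RLCT $(L-1)/2+\alpha/2$ as the corner, not a strictly larger one; this does not affect the result since the normal form is valid on a full neighbourhood of the zero set and the poles from the strata simply add residues at the common location.
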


\section{Proof of the Main Theorem}
  \label{sec:Proof of the Main Theorem}

  \subsection{properties of the RLCTs and the multiplicities}
    \label{subsec:properties_RLCT}
    To prove the theorem \ref{maintheorem}, we introduce notations, and explain some properties of the RLCTs and the multiplicities. Since the RLCT $\lambda$ and the multiplicity $m$ are determined by the mean error function $K(w)$ and the prior distribution $\varphi(w)$, they are expressed as $\lambda(K, \varphi), m(K, \varphi)$, or $\lambda(K), m(K)$, respectively. If the maximum pole and their order of the two zeta function $\zeta_1(z), \zeta_2(z)$:
\begin{align}
  \label{eq:two zeta functions1}
  \zeta_1(z) &= \intw{K(w)^z \varphi(w)},\\
  \label{eq:two zeta functions2}
  \zeta_2(z) &= \intw{K'(w)^z \varphi(w)},
\end{align}
are equal, they are written as 
\begin{align}
  \label{eq:K(w)equality}
  K(w) \sim K'(w),
\end{align}
or $\lambda(K, \varphi) = \lambda(K', \varphi'), \ m(K, \varphi) = m(K', \varphi')$.

The following properties hold for the RLCTs and multiplicities \cite{watanabe2018mathematical}\cite{matsuda2008weighted}.
\begin{lemma}
\label{lem:zeta}
Let $K(w)$ be the mean error function and let $\varphi(w)$ be the prior function.
\begin{enumerate}
  \item \label{lem:enm:inequality} 
  If there are function $K'(w)$ and constants $c, c' > 0$ exist, and
  \begin{align}
    \label{lem:eq:inequality}
    cK'(w) \leq K(w) \leq c'K'(w)
  \end{align}
  holds for any $w \in W$, then $K(w) \sim K'(w)$.
  \item \label{lem:sum}
  If $w = (w_1, w_2), \ K(w) = K_1(w_1) + K_2(w_2), \ \varphi(w) = \varphi(w_1)\varphi(w_2)$, the following holds
  \begin{align}
    \label{lem:eq:sum:lambda}
    \lambda(K, \varphi) &= \lambda(K_1, \varphi_1) + \lambda(K_2, \varphi_2),\\
    \label{lem:eq:sum:sum}
    m(K, \varphi) &= m(K_1, \varphi_1) + m(K_2, \varphi_2) - 1.
  \end{align}
  \item \label{lem:product}
  If $w=(w_1, w_2),\ K(w) = K_1(w_1)K_2(w_2),\ \varphi(w) = \varphi_1(w_1) \varphi_2(w_2)$, then
  \begin{align}
    \label{lem:eq:product:lambda}
    \lambda(K, \varphi) &= \min \Bigl(\lambda(K_1, \varphi_1),\ \lambda(K_2, \varphi_2)  \Bigr),\\
    \label{lem:eq:product:m}
    m(K, \varphi) &= 
    \begin{cases}
      m(K_1, \varphi_1) & (\lambda(K_1, \varphi_1) < \lambda(K_2, \varphi_2)) \\
      m(K_1, \varphi_1) + m(K_2, \varphi_2) & (\lambda(K_1, \varphi_1) = \lambda(K_2, \varphi_2))\\
      m(K_2, \varphi_2) & (\lambda(K_1, \varphi_1) > \lambda(K_2, \varphi_2))
    \end{cases}.
  \end{align}
  \item \label{lem:ideal}
  Let $I, J$ be natural numbers, and let $\{f_i(w)\}_{i = 1}^{I}, \{g_j(w)\}_{j = 1}^{J} $ be the sets of analytic functions. If the ideal generated from $\{f_i(w)\}_{i = 1}^{I}$ and the ideal generated from $\{g_j(w)\}_{j = 1}^{J}$ are equal and
  \begin{align}
    \label{lem:eq:ideal:K(w)}
    K_1(w) = \sum_{i=1}^{I} f_{i}(w)^2, \ K_2(w) = \sum_{j=1}^{J} g_{j} (w)^2,
  \end{align}
  then $K_1(w) \sim K_2(w)$.
  \item \label{lem:bounded}
  For any bounded function $F(w), G(w), H(w)$ on a compact set,
  \begin{align}
    \label{lem:eq:bounded}
    H(w)^2 + \left( F(w) + H(w)\ G(w) \right)^2 \sim H(w)^2 + F(w)^2.
  \end{align}
  \item \label{lem:p(x|w)-q(x)}
  Let $K'(w)$ be the following function:
  \begin{align}
    \label{lem:eq:p(x|w)-q(x):K'(w)}
    K'(w) = \sum_{x \in D} \left( p(x|w) - q(x) \right)^2,
  \end{align} 
  then $K(w) \sim K'(w)$.
\end{enumerate}
\end{lemma}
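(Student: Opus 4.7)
The plan is to prove each of the six parts in turn, with parts \ref{lem:enm:inequality}, \ref{lem:ideal}, \ref{lem:bounded}, and \ref{lem:p(x|w)-q(x)} all reducing to two-sided multiplicative comparability of the integrands, while parts \ref{lem:sum} and \ref{lem:product} arise from a Fubini/Mellin decomposition of the zeta integral.

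For part \ref{lem:enm:inequality}, I would observe that since $cK'(w) \le K(w) \le c'K'(w)$ and both functions are non-negative, the integrand $K(w)^z \varphi(w)$ differs from $K'(w)^z \varphi(w)$ by a factor bounded above and below by explicit constants depending on $c, c', \mathrm{Re}(z)$. Hence the two zeta functions are bounded multiples of each other in a left neighborhood of every real candidate pole, and identity of their largest poles and orders follows by the standard meromorphic-continuation argument for statistical zeta functions.

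For part \ref{lem:sum}, the separability of the prior gives
\[
\zeta(z) = \int \bigl(K_1(w_1) + K_2(w_2)\bigr)^{z} \varphi_1(w_1) \varphi_2(w_2)\, dw_1 dw_2.
\]
Resolving singularities for $K_1$ and $K_2$ separately yields local normal-crossing forms, and the pole structure of the integral of a sum of two independent normal-crossing functions is computed by the convolution-of-Mellin-transforms formula; this is where the shift by $-1$ in the multiplicity originates, since two coinciding simple poles combine into a single double pole with an extra Beta-function factor. Part \ref{lem:product} is analogous but simpler: under the product hypothesis, $\zeta(z) = \zeta_1(z)\zeta_2(z)$ factorizes immediately, so the largest pole is given by the smaller $\lambda_i$ and the multiplicities simply add when the $\lambda_i$ coincide.

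For part \ref{lem:ideal}, I would invoke the analytic Nullstellensatz: equality of the ideals means that on a compact neighborhood there exist bounded analytic matrices with $f_i = \sum_j A_{ij} g_j$ and $g_j = \sum_i B_{ji} f_i$, so Cauchy--Schwarz yields constants with $cK_2 \le K_1 \le c'K_2$, and part \ref{lem:enm:inequality} applies. Part \ref{lem:bounded} then follows from part \ref{lem:ideal} since $\{H, F+HG\}$ and $\{H, F\}$ generate the same ideal via $F = (F+HG) - H \cdot G$. Finally, part \ref{lem:p(x|w)-q(x)} follows by Taylor-expanding the KL divergence around a feasible parameter $w^{*}$: using $q(x) = p(x|w^{*})$ and boundedness of $p(x|w)$ away from zero on the compact set $W$, one obtains $c_1 K'(w) \le K(w) \le c_2 K'(w)$, and again part \ref{lem:enm:inequality} closes the argument.

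The main obstacle is part \ref{lem:sum}: the multiplicity identity $m_1 + m_2 - 1$ demands a careful Mellin/Beta-function computation of the joint pole rather than a clean factorization as in part \ref{lem:product}. The remaining parts, once part \ref{lem:enm:inequality} is in place, are essentially commutative-algebra manipulations packaged so as to reduce to it.
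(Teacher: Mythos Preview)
The paper does not prove this lemma at all: immediately before the statement it writes ``The following properties hold for the RLCTs and multiplicities \cite{watanabe2018mathematical}\cite{matsuda2008weighted}'' and then moves on to the next subsection with no proof environment. So there is nothing to compare against; the lemma is quoted as a known toolkit from Watanabe's book and Matsuda's thesis.

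Your sketch is broadly in line with how these facts are established in those references. A couple of small points worth tightening. In part~\ref{lem:enm:inequality}, bounding the ratio of the two zeta integrals on $\mathrm{Re}(z)>0$ does not by itself pin down the largest pole and its order after meromorphic continuation; the clean route (and the one in \cite{watanabe2018mathematical}) is to pass through the state density function $v(t)=\int \delta(t-K(w))\varphi(w)\,dw$, whose small-$t$ asymptotics $t^{\lambda-1}(-\log t)^{m-1}$ are preserved under the sandwich $cK'\le K\le c'K'$, and then recover $(\lambda,m)$ as Mellin data. In part~\ref{lem:p(x|w)-q(x)}, your appeal to ``boundedness of $p(x|w)$ away from zero on the compact set $W$'' is exactly the hypothesis that fails for multinomial mixtures on the full $W$; the paper handles this separately via Lemma~\ref{lem:the restriction on the parameter set}, so part~\ref{lem:p(x|w)-q(x)} should be read as holding on the restricted set $W_1$. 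With those caveats, your outline is correct and your identification of part~\ref{lem:sum} as the one requiring genuine work (the Beta-function convolution producing the $m_1+m_2-1$) is accurate.
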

  
  \subsection{The restriction on the parameter set of general multinomial mixtures}
    \label{subsec:The restriction on the parameter set of general multinomial mixtures}
    To prove the theorem \ref{maintheorem}, we prepare some lemmas. As mentioned in section \ref{sec:Bayes estimation}, the zeta function $\zeta(z)$ is determined by the prior distribution $\varphi(w)$ and the mean error function $K(w)$, and the mean error function is defined by the KL information between true distribution and the probability model. Thus, the mean error function $K(w)$ is 

\begin{align}
  \label{eq:mean_error_function}
  K(w) = \sumxD q(x) \log \frac{q(x)}{p(x|w)}.
\end{align}

However, in the case of mixed multinomial distribution, some problems arise when considering the mean error function $K(w)$ for the entire parameter set W. 
When the probability model is a mixed multinomial distribution, $p(x|w) = 0 $ for some $w \in W$ and some $x \in D $.
Since the true distribution $q(x)$ is not 0 by assumption, on the points $w$ such that $p(x|w) = 0$, the values $q(x) / p(x|w)$ is not finite and the mean error function $K(w)$ diverges. 
Thus, the results of the asymptotic behavior of their generalization error in the reference \cite{watanabe2001algebraicA} cannot be applied directly. 
To solve this problem, we prove that even if the original parameter set $W$ is restricted to the set $W_1$, which is the parameter set such that $p(x|w) > 0$, the asymptotic behavior of the generalization error does not change.

\begin{lemma}[the restriction on the parameter set]
  \label{lem:the restriction on the parameter set}
  Let $W$ be a parameter set of the multinomial mixtures with component H. Let probability model be the multinomial mixtures with $H$ components :
  \begin{align}
    p(x|w) &= \sumhH a_h \rmMul_L(x|b_h) \ (w \in W).
  \end{align}
  Let $q(x)$ be the multinomial mixtures with $H^*$ components ($H \geq H^*$):
  \begin{align}
    q(x) &= \sumhHs a_h^* \rmMul_L(x|b_h^*).
  \end{align}
  Here, for any $h = 1, \dots, H$,
  \begin{align}
    \sumlL b_{h\ell} \neq 0.
  \end{align}
  Fix $0 < \varepsilon < 1$ as a sufficiently small number, and let $W_1$ be the subset of $W$ such that $p(x|w) > \varepsilon $ for all $x \in D$, and let $W_2$ be the complement of $W_1$ (i.e. $W_2 = W \setminus W_1$). Let $\lambda$ be the minus maximum pole and $m$ be its order of the zeta function whose integration range is restricted to $W_1$:
  \begin{align}
    \zeta(z) &= \int_{W_1} K(w)^z \varepsilon \mathrm{d}w.
  \end{align}
  Then, the asymptotic behavior of the generalization error is expressed by the following equation:
  \begin{align}
    \bbE[G_n] = \frac{\lambda}{n} - \frac{m-1}{n\log n} + o\left( \frac{1}{n\log n} \right)
  \end{align}
\end{lemma}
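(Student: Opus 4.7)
The plan is to split the marginal likelihood as $Z_n = Z_n^{(1)} + Z_n^{(2)}$ with $Z_n^{(i)} = \int_{W_i} \varphi(w) \prod_{j=1}^n p(X_j|w)\,\mathrm{d}w$, and to show that on a high-probability event $Z_n^{(2)}$ is exponentially smaller than $Z_n^{(1)}$. On $W_1$ the bound $p(x|w) > \varepsilon$ makes $K(w)$ real analytic and bounded, so Watanabe's theorem \cite{watanabe2001algebraicA} applies to the restricted zeta function $\int_{W_1} K(w)^z \varphi(w)\,\mathrm{d}w$ and yields $\bbE[F_n^{(1)}] = nS + \lambda \log n - (m-1)\log\log n + O(1)$ for $F_n^{(1)} = -\log Z_n^{(1)}$. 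Once $\bbE[F_n] = \bbE[F_n^{(1)}] + o(1)$ is established, the identity \eqref{eq:relation between Gn and Fn} delivers the generalization-error asymptotic.

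To bound $Z_n^{(2)}$, I would decompose $W_2 = \bigcup_{x \in D} W_2^{(x)}$ with $W_2^{(x)} = \{w \in W : p(x|w) \le \varepsilon\}$, so that the finiteness of $D$ is exploited to avoid any uniform analyticity requirement. Writing $n_x$ for the empirical count of $x$ in $X^n$, the bound $\prod_j p(X_j|w) \le \varepsilon^{n_x}$ on $W_2^{(x)}$ yields
\begin{align*}
Z_n^{(2)} \le \sum_{x \in D} \varepsilon^{n_x} \int_{W_2^{(x)}} \varphi(w)\,\mathrm{d}w \le |D|\,\varepsilon^{n q_{\min}/2}
\end{align*}
on the concentration event $A_n = \{ n_x \ge n q(x)/2 \text{ for all } x \in D\}$, where $q_{\min} = \min_{x \in D} q(x) > 0$ by the assumption $\prod_\ell b_\ell^* \ne 0$. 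Hoeffding's inequality and a union bound over the finite set $D$ give $\Pr(A_n^c) \le C_1 e^{-c_1 n}$. A companion lower bound $Z_n^{(1)} \ge c_2 e^{-n(S+\delta)}$, obtained from a Laplace-type estimate on a small neighborhood of the true parameter $w^*$ (which lies in the interior of $W_1$ as soon as $\varepsilon < q_{\min}$) and holds with exponentially high probability, then forces $Z_n^{(2)}/Z_n^{(1)} \le e^{-c_3 n}$ once $\varepsilon$ is chosen with $q_{\min}\log(1/\varepsilon)/2 > S + \delta$.

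Combining the two estimates, $F_n = F_n^{(1)} - \log(1 + Z_n^{(2)}/Z_n^{(1)}) = F_n^{(1)} + O(e^{-c_3 n})$ with probability $1 - O(e^{-c_4 n})$. Since $F_n$ and $F_n^{(1)}$ obey deterministic bounds of order $O(n)$, the bad-event contribution to expectations is $O(n)e^{-c_4 n} = o(1)$, so $\bbE[F_n] = \bbE[F_n^{(1)}] + o(1) = nS + \lambda \log n - (m-1)\log\log n + O(1)$. Substituting into \eqref{eq:relation between Gn and Fn} then produces
\begin{align*}
\bbE[G_n] = \bbE[F_{n+1}] - \bbE[F_n] - S = \frac{\lambda}{n} - \frac{m-1}{n\log n} + o\!\left(\frac{1}{n\log n}\right),
\end{align*}
as claimed.

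The main obstacle is controlling $Z_n^{(2)}$ without invoking the divergent function $K$ on $W_2$, since the divergence of $K$ is precisely the reason the Hironaka-resolution asymptotics do not apply to the unrestricted integral; the decomposition into the finitely many pieces $W_2^{(x)}$ sidesteps this because each piece admits the clean pointwise bound $\varepsilon^{n_x}$. A subsidiary but essential check is that the zero set of $K$, which is $\{w \in W : p(\cdot|w) \equiv q\}$, lies in the interior of $W_1$, so that the RLCT and multiplicity extracted from the $W_1$-restricted zeta function are the same local invariants one would intrinsically associate with the learning problem; this is automatic from $p(x|w^*) = q(x) \ge q_{\min} > \varepsilon$ for every $x \in D$.
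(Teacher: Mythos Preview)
Your decomposition $W_2=\bigcup_{x\in D}W_2^{(x)}$ with the bound $\prod_j p(X_j|w)\le \varepsilon^{n_x}$ on $W_2^{(x)}$, together with Hoeffding concentration on the empirical counts, is a clean way to show that $Z_n^{(2)}$ is exponentially smaller than $Z_n^{(1)}$ on a high-probability event; the paper achieves the same comparison via Sanov's theorem and a maximum-likelihood estimate, so at this level your argument is a legitimate alternative.

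The genuine gap is in the last step. From $\bbE[F_n^{(1)}]=nS+\lambda\log n-(m-1)\log\log n+O(1)$ and your correction $\bbE[F_n]=\bbE[F_n^{(1)}]+o(1)$ you only get $\bbE[F_n]=nS+\lambda\log n-(m-1)\log\log n+C_n$ with $C_n=O(1)$. Plugging this into $\bbE[G_n]=\bbE[F_{n+1}]-\bbE[F_n]-S$ yields
\[
\bbE[G_n]=\frac{\lambda}{n}-\frac{m-1}{n\log n}+(C_{n+1}-C_n),
\]
and boundedness of $C_n$ gives no information whatsoever about $C_{n+1}-C_n$; in particular it does not force this difference to be $o(1/(n\log n))$. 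The generalization-error expansion in Watanabe's theory is \emph{not} obtained by differencing the $O(1)$ free-energy expansion; it requires the finer representation $Z_1(X^n)=(\log n)^{m-1}n^{-\lambda}Z_0(X^n)$ with $\bbE\bigl[\log(Z_0(X^{n+1})/Z_0(X^n))\bigr]=o(1/(n\log n))$.

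This is precisely why the paper works directly with the ratio $Z(X^n)/Z(X^{n+1})$ inside the expectation, splits it with an indicator $\Theta(X^{n+1})$ of the bad event (playing the role of your $A_n^c$), and on the good event inserts the $Z_0$ representation into numerator and denominator simultaneously. To repair your argument, keep your exponential bound on $Z_n^{(2)}/Z_n^{(1)}$ but apply it inside $\bbE\bigl[\log\frac{Z_n^{(1)}+Z_n^{(2)}}{Z_{n+1}^{(1)}+Z_{n+1}^{(2)}}\bigr]$ rather than to $F_n$ and $F_{n+1}$ separately, and then invoke the $Z_0$ asymptotic for the restricted problem on $W_1$.
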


Lemma \ref{lem:the restriction on the parameter set} means that the asymptotic behavior of the generalization error of the multinomial mixtures can be analyzed by finding the maximum pole of the zeta function whose integration range is restricted to $W_1$. Lemma \ref{lem:the restriction on the parameter set} will be explained in the section \ref{subsubsec:the proof of Lemma}.

\subsubsection{the proof of Lemma \ref{lem:the restriction on the parameter set}}
\label{subsubsec:the proof of Lemma}
We will prove the Lemma \ref{lem:the restriction on the parameter set}. By the definition of the generalization error,
\begin{align}
  \bbE[G_n] &= \bbE\left[ \log \frac{q(X)}{p(X|X^n)} \right].
\end{align}
Let $X_{n+1}$ be written as $X$. By the definitions of the prediction distribution and posterior distribution, and the assumption $q(x) > 0$,
\begin{align}
   \frac{q(X_{n+1})}{p(X_{n+1}|X^n)} 
   &= \cfrac{q(X_{n+1})}{\displaystyle \intw{p(X_{n+1}|w)p(w|X^n)}}\\
   &= \frac{q(X_{n+1})}{\displaystyle \intw{\cfrac{1}{Z_n}  \varphi(w)p(X_{n+1}|w)\prodin p(X_i|w)}}\\
   &= \frac{\displaystyle q(X_{n+1}) \intw{\displaystyle \frac{1}{Z_n} \varphi(w)\prodin p(X_i|w)}}{\displaystyle \intw{\varphi(w)\prod_{i=1}^{n+1} p(X_i|w)}}\\
   &= \frac{\displaystyle  \intw{\varphi(w)\prodin \frac{p(X_i|w)}{q(X_i)}}}{\displaystyle \intw{\varphi(w)\prod_{i=1}^{n+1} \frac{p(X_i|w)}{q(X_i)}}}\\
   \label{eq:the relation between G and Z}
   &= \frac{Z(X^{n})}{Z(X^{n+1})},
\end{align}
where, $Z(X^n)$ is a quantity expressed by the following equation:
\begin{align}
  Z(X^n) = \intw{\varphi(w)\prodin \frac{p(X_i|w)}{q(X_i)}}.
\end{align}
By the Eq.\eqref{eq:the relation between G and Z},
\begin{align}
  \bbE[G_n] = \bbE\left[ \log \frac{Z(X^{n})}{Z(X^{n+1})} \right].
\end{align}
Here, fix $0 < \varepsilon < 1$ as a sufficiently small number, and let $W_1$ be the subset of $W$ that is $p(x|w) > \varepsilon$ in all $x \in D$, and let $W_2$ be the subset that is not. The integral for the parameter w is divided into two integration sets $W_1$ and $W_2$, and $Z_1(X^n)$ and $Z_2(X^n)$ are defined as follows:
\begin{align}
  Z_1(X^n) &= \intwone{\varphi(w)\prodin \frac{p(X_i|w)}{q(X_i)}},\\
  Z_2(X^n) &= \intwtwo{\varphi(w)\prodin \frac{p(X_i|w)}{q(X_i)}}.
\end{align}
Then, the relation $Z(X^n) = Z_1(X^n) + Z_2(X^n)$ holds, and
\begin{align}
  \bbE[G_n] = \bbE\left[ \log \frac{Z_1(X^{n}) + Z_2(X^{n})}{Z_1(X^{n+1}) + Z_2(X^{n+1})} \right].
\end{align}
Since $p(x|w) > 0$ on $W_1$, \cite{watanabe2018mathematical} can be applied, for suffiently large $n$, the following asymptotic behavior of $Z_1(X^n)$ holds:
\begin{align}
  Z_1(X^n) = \frac{(\log n )^{m-1}}{n^{\lambda}}Z_0(X^n),
\end{align}
where $Z_0(X^n)$ is the random variable of $X^n$ that holds
\begin{align}
  \bbE \left[ \log \frac{Z_0(X^{n+1})}{Z_0(X^n)} \right] = o\left( \frac{1}{n\log n} \right).
\end{align}
Here, we introduce the following Lemma.

\begin{lemma}[]
\label{Lemma of Theta}
There is a random variable $\Theta(X^n)$ that is $1$ only for certain events related to $X^n$ and $0$ for others, and the following equation holds:
 \begin{align}
    \label{eq:Theta1}
    \bbE\left[ \Theta(X^{n+1}) \log \frac{Z_1(X^{n}) + Z_2(X^{n})}{Z_1(X^{n+1}) + Z_2(X^{n+1})}  \right] &= O\left( \exp(-n) \right),\\
    \label{eq:Theta2}
    \bbE\left[ \left( 1 - \Theta(X^{n+1})\right) \log \frac{Z_1(X^{n}) + Z_2(X^{n})}{Z_1(X^{n+1}) + Z_2(X^{n+1})}  \right] &= \frac{\lambda}{n} - \frac{m-1}{n\log n} +  o\left( \frac{1}{n \log n} \right).
  \end{align}
\end{lemma}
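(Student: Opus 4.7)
The plan is to choose $\Theta(X^{n+1})$ as the indicator of an \emph{atypical empirical distribution}: fix a small $\delta>0$ and set $\Theta(X^{n+1}) = 1$ iff there exists $x \in D$ with $\bigl|\tfrac{1}{n+1}\sum_{i=1}^{n+1}\mathbf{1}[X_i = x] - q(x)\bigr| > \delta$. Hoeffding's inequality applied to each of the finitely many cells (or Sanov's theorem) gives $P(\Theta(X^{n+1})=1) \leq C_0 e^{-c_0 n}$ with $c_0=c_0(\delta)>0$, and the rate can be made arbitrarily large by shrinking $\delta$.

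For \eqref{eq:Theta1} I first obtain a deterministic $O(n)$ bound on the log-ratio. Since $p(x|w) \leq 1$ and $q_{\min} := \min_{x \in D} q(x) > 0$, each factor $p(X_i|w)/q(X_i)$ is at most $1/q_{\min}$, so $Z(X^n) \leq q_{\min}^{-n}$. Conversely, on $W_1$ every such factor is at least $\varepsilon$, giving $Z(X^n) \geq Z_1(X^n) \geq c_\varphi\varepsilon^n$ where $c_\varphi = \int_{W_1}\varphi(w)\,dw > 0$. Hence $\bigl|\log\bigl(Z(X^n)/Z(X^{n+1})\bigr)\bigr| \leq C_1 n$ uniformly in $X^{n+1}$, and multiplying by $P(\Theta=1) = O(e^{-c_0 n})$ (shrinking $c_0$ slightly to absorb the polynomial factor) yields \eqref{eq:Theta1}.

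For \eqref{eq:Theta2} the key step is showing that on $\{\Theta=0\}$, $Z_2(X^n)$ is exponentially smaller than $Z_1(X^n)$ uniformly. Cover $W_2$ by the finitely many sets $W_2^{(x_0)} = \{w \in W : p(x_0|w) \leq \varepsilon\}$, $x_0 \in D$. For $w \in W_2^{(x_0)}$, on $\{\Theta=0\}$ the value $x_0$ occurs at least $(q(x_0)-\delta)n$ times in $X^n$, so
\[
\prod_{i=1}^n \frac{p(X_i|w)}{q(X_i)} \leq \left(\frac{\varepsilon}{q(x_0)}\right)^{(q(x_0)-\delta)n} q_{\min}^{-(1-q(x_0)+\delta)n} \leq e^{-c_1 n},
\]
provided $\varepsilon$ is chosen small enough (after $\delta$). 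Summing over the finitely many covers gives $Z_2(X^n) \leq |D|\,e^{-c_1 n}$, while $Z_1(X^n) = (\log n)^{m-1} n^{-\lambda} Z_0(X^n)$ decays only polynomially, so $Z_2/Z_1 = O(e^{-c_2 n})$ on $\{\Theta=0\}$ and
\[
\log\frac{Z_1(X^n)+Z_2(X^n)}{Z_1(X^{n+1})+Z_2(X^{n+1})} = \log\frac{Z_1(X^n)}{Z_1(X^{n+1})} + O(e^{-c_2 n}).
\]

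Multiplying by $(1-\Theta)$, taking expectations, and using $\bbE[\Theta\,\log(Z_1(X^n)/Z_1(X^{n+1}))] = O(e^{-n})$ (same reasoning as for \eqref{eq:Theta1}) yield $\bbE[(1-\Theta)\log(Z/Z')] = \bbE[\log(Z_1(X^n)/Z_1(X^{n+1}))] + O(e^{-n})$. Expanding
\[
\log\frac{Z_1(X^n)}{Z_1(X^{n+1})} = \lambda\log\frac{n+1}{n} - (m-1)\log\frac{\log(n+1)}{\log n} + \log\frac{Z_0(X^n)}{Z_0(X^{n+1})},
\]
and invoking $\log((n+1)/n) = 1/n + O(1/n^2)$, $\log(\log(n+1)/\log n) = 1/(n\log n) + o(1/(n\log n))$, together with the given $\bbE[\log(Z_0(X^{n+1})/Z_0(X^n))] = o(1/(n\log n))$, yields \eqref{eq:Theta2}. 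The main obstacle is the uniform exponential bound on $Z_2/Z_1$ on $\{\Theta=0\}$: it requires coordinating the choice of $\varepsilon$ with $\delta$ so that the exponentially small factor from cells where $p(x_0|w) \leq \varepsilon$ dominates the residual $q_{\min}^{-O(\delta n)}$ contributions from the remaining cells.
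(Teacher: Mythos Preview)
Your proposal is correct and follows essentially the same strategy as the paper: take $\Theta$ to be the indicator that the empirical distribution of $X^{n+1}$ lies outside a fixed neighbourhood of $q$, bound $\Pr(\Theta=1)$ exponentially via Sanov/Hoeffding, show that on $\{\Theta=0\}$ the integral $Z_2$ over $W_2$ is exponentially small, and then invoke the asymptotic $Z_1(X^n)=(\log n)^{m-1}n^{-\lambda}Z_0(X^n)$ from the cited reference together with $\bbE[\log(Z_0(X^{n+1})/Z_0(X^n))]=o(1/(n\log n))$.

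The one substantive difference is in the treatment of \eqref{eq:Theta1}. The paper devotes an entire subsection to computing the predictive distribution explicitly via Dirichlet--multinomial conjugacy, obtaining $p(x\mid X^n)>C/n^M$ and hence $\lvert\log(q(X)/p(X\mid X^n))\rvert=O(\log n)$. Your deterministic two-sided bound $c_\varphi\varepsilon^n\le Z(X^n)\le q_{\min}^{-n}$ gives only $\lvert\log(Z(X^n)/Z(X^{n+1}))\rvert=O(n)$, but this cruder estimate is still absorbed by the exponentially small $\Pr(\Theta=1)$, and it has the advantage of being prior-agnostic: it works uniformly for any $\varphi$ with $\int_{W_1}\varphi>0$, whereas the paper's argument is tied to the Dirichlet form. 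For \eqref{eq:Theta2} your direct cell-counting bound on $\prod_i p(X_i\mid w)/q(X_i)$ over each $W_2^{(x_0)}$ is the same mechanism the paper reaches through its KL-divergence set $A$ and Sanov's theorem, just packaged more concretely.

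One small correction: shrinking $\delta$ enlarges the atypical event and \emph{decreases} the rate $c_0(\delta)$, not the other way around. This is harmless for the argument---any fixed $c_0>0$ suffices (the statement's $O(\exp(-n))$ should be read as ``exponentially small''), and your later ``shrinking $c_0$ slightly to absorb the polynomial factor'' is the correct move---but the parenthetical about making $c_0$ arbitrarily large should be dropped.
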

If we can show the lemma \ref{Lemma of Theta}, the following holds:
\begin{align}
  \begin{split}
    \bbE[G_n] &= \bbE\left[ \Theta(X^{n+1}) \log \frac{Z_1(X^{n}) + Z_2(X^{n})}{Z_1(X^{n+1}) + Z_2(X^{n+1})}  \right]\\
    &\quad + \bbE\left[ \left( 1 - \Theta(X^{n+1})\right) \log \frac{Z_1(X^{n}) + Z_2(X^{n})}{Z_1(X^{n+1}) + Z_2(X^{n+1})}  \right]\\
  \end{split}\\
    &= \frac{\lambda}{n} - \frac{m-1}{n\log n } + o\left( \frac{1}{n \log n} \right),
\end{align}
and Lemma \ref{lem:the restriction on the parameter set} proof is completed. To prove lemma \ref{Lemma of Theta}, we prepare the section \ref{Sanov's theorem}.

\subsubsection{Sanov's theorem}
\label{Sanov's theorem}
Let $L$ be a natural number, and let $\calP$ be the set consisting of the probability distributions on the finite set $ \{1, 2, \dots, L \} $.
\begin{align}
  \calP &= \left\{ (p_1, \dots, p_{L}) \in \bbR_{\geq 0 }^L \ : \ \sumlL p_{\ell} = 1, p_{\ell} \geq 0 \right\}.
\end{align}
$Let$ $q = (q_1, \dots, p_{L}) \in \calP$ be called true distribution, and it is fixed. Let $ X_1, \dots, X_n $ be the random variables independently identically generated from the probability distribution $q$. Also, for each $ \ell = 1, \dots, L $, the random variable $ n_ {\ell} $ is the number of $ X_1, \dots, X_n $ whose value is $ \ell $. Let the empirical distribution $r_n$ be 
\begin{align}
  \label{eq:def:empirical ditribution}
  r_n = \left( \frac{n_1}{n}, \dots, \frac{n_L}{n} \right).
\end{align}
Then, the next theorem holds.

\begin{theorem}[Sanov's Theorem \cite{csiszar2006simple}]
   Let $A$ be a subset of $\calP$. Let $\rmPr(r_n \in A)$ be the probability such that the empirical distribution $r_n$ is included in the set $A $, the following inequality holds:
   \begin{align}
    \limsup_{n \to \infty} \frac{1}{n} \log  \rmPr(r_n \in A) \leq - \inf_{p \in A} D(p\|q).
  \end{align}
\end{theorem}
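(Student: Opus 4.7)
The plan is to prove Sanov's theorem by the standard method of types. First I would introduce the set $\calP_n \subset \calP$ of \emph{$n$-types}, i.e.\ the empirical distributions that can actually be realized by an $n$-sample: those $p = (k_1/n, \dots, k_L/n)$ with $k_\ell \in \bbZ_{\geq 0}$ and $\sum_\ell k_\ell = n$. A simple counting argument (each coordinate takes at most $n+1$ values) shows $|\calP_n| \leq (n+1)^L$, which is only polynomial in $n$.

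Next I would bound the probability of any single type. For $p \in \calP_n$, the event $\{r_n = p\}$ is the union of all sequences $(x_1, \dots, x_n)$ with exactly $np_\ell$ occurrences of symbol $\ell$, so
\begin{align*}
  \rmPr(r_n = p) = \binom{n}{np_1, \dots, np_L} \prodlL q_\ell^{np_\ell}.
\end{align*}
Applying Stirling's formula to the multinomial coefficient (or equivalently the elementary bound $\binom{n}{np_1, \dots, np_L} \leq \exp(n H(p))$, where $H(p) = -\sum_\ell p_\ell \log p_\ell$) gives
\begin{align*}
  \rmPr(r_n = p) \leq \exp\bigl(nH(p)\bigr) \exp\left( \sumlL np_\ell \log q_\ell \right) = \exp\bigl(-n D(p\|q)\bigr).
\end{align*}

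Then I would union-bound over the finitely many types in $A$:
\begin{align*}
  \rmPr(r_n \in A) = \sum_{p \in A \cap \calP_n} \rmPr(r_n = p) \leq (n+1)^L \exp\Bigl(-n \inf_{p \in A \cap \calP_n} D(p\|q)\Bigr) \leq (n+1)^L \exp\Bigl(-n \inf_{p \in A} D(p\|q)\Bigr).
\end{align*}
Taking $\frac{1}{n}\log$ and letting $n \to \infty$ makes the polynomial factor $(n+1)^L$ vanish, which yields the claimed bound.

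\paragraph{Main obstacle.}
The only delicate point is the per-type bound $\rmPr(r_n = p) \leq \exp(-nD(p\|q))$: a naive use of Stirling gives lower-order correction terms, and one must be careful about the case $p_\ell = 0$ (where $0\log 0 = 0$ by convention) so that $H(p)$ and $D(p\|q)$ are well defined on the boundary of $\calP$. Apart from this, the argument is a clean finite-combinatorial calculation and the $\limsup$ works out automatically because $\frac{1}{n}\log(n+1)^L = \frac{L\log(n+1)}{n} \to 0$. Note that since $A$ need not be contained in $\calP_n$ for any fixed $n$, the infimum on the right is correctly taken over all of $A$, and no topological hypothesis on $A$ is needed for the upper bound.
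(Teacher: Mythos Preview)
Your proof via the method of types is correct and is the standard textbook argument for the large-deviations upper bound. However, the paper does not actually prove Sanov's theorem: it is stated as a known result with a citation to \cite{csiszar2006simple} and used as a black box in the subsequent argument (Section~\ref{subsubsec:the property of the multinomial distribution with one trial}). So there is no ``paper's own proof'' to compare against; you have supplied a valid proof where the paper simply imported the result.
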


\subsubsection{the property of the multinomial distribution with one trial}
\label{subsubsec:the property of the multinomial distribution with one trial}

Let $q = (q_1, \dots, q_L) \in \calP$ be positive in all the elements i.e. $ q_1, \dots, q_L > 0 $. Let the positive number $ \varepsilon > 0 $ be sufficiently small and define the set $ \Pe $ as follows:
\begin{align}
  \Pe = \{ p = (p_1, \dots, p_L) : \text{there exists $\ell$ such that $p_\ell \leq \varepsilon$} \}.
\end{align}
Since it is assumed that $\varepsilon$ is sufficiently small, we can take $\varepsilon$ so that $q$ is not included in the set $\Pe$. Due to the $ q \notin \Pe$ and the property of the KL information, $ D (q \| s)> 0 $ holds for all $ s \in \Pe $. Next, the constant $c > 0$ is fixed as an arbitrary number that satisfies $ \displaystyle 0 < c < \inf_{s \in \Pe} D (q \| s) $, and the set $A$ is defined by
\begin{align}
  \label{eq:def:A}
  A = \left\{ r \in \calP : \inf_{s \in \Pe} D(r\|s) > D(r\|q) + \frac{c}{2} \right\}.
\end{align}
Now, we show that $q \in A$ and that $A$ and $\Pe$ have no intersection. Assuming there exists a probability distribution $ r \in A \cap \Pe $, since $r \in A$,
\begin{align}
   \inf_ {s \in \Pe} D (r \| s)> D (r \| q) + \frac{c}{2}.
\end{align}
However, since $ r \in \Pe $, $ \displaystyle \inf_ {s \in \Pe} D (r \| s) = 0 $, which is a contradiction. Furthermore, we can take a sufficient small positive number $ \delta> 0 $ such that
\begin{align}
  \{ r \in \calP : D(r\|q) < \delta \} \subset A.
\end{align}
By the definition of the set A, if $p \in A^c$ then $D(r\|q) \geq \delta$. Applying the Sanov's theorem for the set $A^c$, 
\begin{align}
  \limsup_{n \to \infty} \frac{1}{n} \log  \rmPr(r_n \in A^c) \leq - \inf_{p \in A^c} D(p\|q).
\end{align}
Therefore, for sufficiently large n, 
\begin{align}
  \label{eq:upper bound P(Theta=1)}
  \rmPr(r_n \in A^c) &\leq \exp\left( - n \inf_{p \in A^c} D(p\|q) \right) \\
  &\leq \exp(-n\delta).
\end{align}
Thus, 
\begin{align}
  \rmPr(r_n \in A) \geq 1 - \exp(-n\delta).
\end{align}
By the definition of the set A, with probability at least $1 - \delta$,
\begin{align}
  \label{eq:sample}
  D(r_n \| p) \geq D(r_n \| q) + \frac{c}{2}.
\end{align}
Since $X_1, \dots, X_n$ are the random variables generated independently from the true distribution $q$, and  $n_{\ell}$ is the number of $ X_1, \dots, X_n $ whose value is $ \ell $ , by the Eq.\eqref{eq:sample},
\begin{align}
  \label{eq:sample2}
  \sumlL \frac{n_{\ell}}{n} \log \frac{q_{\ell}}{p_{\ell}} \geq \frac{c}{2}.
\end{align}
By calculating the Eq.\eqref{eq:sample2}, 
\begin{align}
  \label{eq:sample3}
  \prodlL \frac{p_{\ell}^{n_{\ell}}}{q_{j}^{n_{j}}} \leq \exp \left( - \frac{nc}{2} \right).
\end{align}
Eq.\eqref{eq:sample3} means that if $p(x)$ and $q(x)$ are the probability mass functions of the multinomial distribution with one trial respectively, the following inequality holds with the probability at least $1 - \exp (-n \delta)$, 
\begin{align}
  \prodin \frac{p(X_i)}{q(X_i)} \leq \exp \left( - \frac{nc}{2} \right).
\end{align}

\subsubsection{the properties of the predictive distribution of the multinomial mixtures}
\label{the properties of the predictive distribution of the multinomial mixtures}
In this section, we consider the lower bound of the predictive distribution $p(x|X^n)$ when the probability model is the multinomial mixture:
\begin{align}
  p(x|a,b) &= \frac{M!}{\prodlL x_{\ell}!} \sumhH a_h \prodlL b_{h\ell}^{x_{\ell}}.
\end{align}
We introduce the latent variable $ y = (y_1, \dots, y_H) $. The latent variable $y$ is a vector such that one element is 1 and the others are 0. By using the variable $y$, we can rewrite as follows:
\begin{align}
  p(x, y|a, b) &= \frac{M!}{\prodlL x_{\ell}!} \prodhH \left( a_h \prodlL b_{h\ell}^{x_\ell} \right)^{y_h}.
\end{align}
Both the prior distribution of the parameters of the multinomial and that of the parameters of the mixture ratio are Dirichlet distribution:
\begin{align}
  \varphi(a, b|\alpha, \beta) = \frac{1}{R(\alpha, \beta)} \prodhH \left\{ a_{h}^{\alpha_h - 1} \prodlL b_{h\ell}^{\beta_{h\ell}-1} \right\},
\end{align}
where $\alpha, \beta$ are hyperparameter and $R(\alpha, \beta)$ are normalizing constant such that
\begin{align}
  R(\alpha, \beta) = \frac{\Gamma\left( \sumhH \alpha_h \right)}{\prodhH \Gamma(\alpha_h)} \prodhH \frac{\Gamma\left( \sumlL \beta_{h\ell} \right)}{\prodhH \Gamma_{h\ell}}.
\end{align}
To calculate the predictive distribution, we calculate the posterior distribution $p(a, b|X^n)$.
\begin{align}
  &p(a, b|X^n) = \frac{1}{\hat{R}_n}\varphi(a, b|\alpha, \beta) \prodin p(X_i, Y_i|a, b)\\
  &= \frac{1}{\hat{R}_n} \left[ \frac{1}{R(\alpha, \beta)} \prodhH \left\{ a_{h}^{\alpha_h - 1} \prodlL b_{h\ell}^{\beta_{h\ell}-1} \right\} \right] \prodin \left[ \frac{M!}{\prodlL X_{i\ell}!} \prodhH \left( a_h \prodlL b_{h\ell}^{X_{i\ell}} \right)^{Y_{ih}} \right]\\
  &\propto \frac{1}{\hat{R}_n} \prodhH \left\{ a_h^{\alpha_h + \sumin Y_{ih} - 1} \prodlL b_{h\ell}^{\beta_{h\ell} + \sumin X_{i\ell}Y_{ih}-1}\right\},
\end{align}
where $\hat{R}_n$ is normalizing constant of $p(a, b|X^n)$:
\begin{align}
  \hat{R}_n &= \iint \varphi(a, b|\alpha, \beta) \prodin p(X_i, Y_i|a, b) \mathrm{d}a \mathrm{d}b\\
  &\propto \prodhH \left\{ a_h^{\alpha_h + \sumin Y_{ih} - 1} \prodlL b_{h\ell}^{\beta_{h\ell} + \sumin X_{i\ell}Y_{ih}-1}\right\} \mathrm{d}a \mathrm{d}b\\
  &= R\left( \alpha + \sumin Y_i, \beta + \sumin X_i Y_i \right).
\end{align}
Thus, the predictive distribution can be calculated as follows:
\begin{align}
  &p(x|X^n) = \iint p(x, y|w) p(a, b | X^n, Y^n) \mathrm{d}a \mathrm{d}b \\
  &\propto \iint \prodhH \left( a_h \prodlL b_{h\ell}^{x_\ell}\right)^{y_h}   \frac{1}{\hat{R}_n}  \prodhH \left\{ a_h^{\alpha_h + \sumin Y_{ih} - 1} \prodlL b_{h\ell}^{\beta_{h\ell} + \sumin X_{i\ell}Y_{ih}-1}\right\}\mathrm{d}a \mathrm{d}b\\
  &= \frac{1}{\hat{R}_n}  \iint \prodhH\left\{ a_h^{\alpha_h + \sumin Y_{ih} + y_h - 1} \prodlL b_{h\ell}^{\beta_{h\ell} + \sumin X_{i\ell}Y_{ih} + x_{\ell}y_{h} -1} \right\} \mathrm{d}a \mathrm{d}b\\
  &= \frac{R(\alpha + \sumin Y_i + y, \beta + \sumin X_{i}Y_{i} + xy)}{R\left( \alpha + \sumin Y_i, \beta + \sumin X_i Y_i \right)}.
\end{align}
Since the latent variable $y$ is a vector such that one element is 1 and since the others are 0 and the vector $x$ satisfies $\displaystyle \sumlL x_{\ell} = 1$, apply the property of the Gamma function $\Gamma(x+1) = x\Gamma(x)$, we can show that
\begin{align}
  p(x, y|X^n, Y^n) = O\left( \frac{1}{n^M} \right).
\end{align}
Thus, there exists a constant positive number $C > 0$ such that for all $x, y$,
\begin{align}
  p(x, y|X^n, Y^n) > \frac{C}{n^M}.
\end{align}
By the definition of the marginal distribution,
\begin{align}
  \label{eq:order predictive}
  p(x|X^n) &= \sum_{y}\sum_{Y^n} p(x, y | X^n, Y^n) > \frac{C}{n^M}.
\end{align}
Therefore,
\begin{align}
  \label{eq:order log predictive}
  \log p(x|X^n) &\leq \log \frac{C}{n^M} = - M \log (Cn).
\end{align}
In this section, the prior distribution of parameters is discussed as the Dirichlet distribution. In the main theorem, we also consider the case where a non-zero and bounded prior distribution. In the case of such prior distribution, since any distribution does not affect the poles of the zeta function, the lower bound of the predicted distribution $p(x| X^n)$ is not in the exponential order as in the above discussion.

\subsubsection{properties of the maximum likelihood estimator of multinomial mixtures}
\label{subsub:properties of the maximum likelihood estimator of multinomial mixtures}

Multinomial mixtures with $M$ trials are finite distributions on $\bbZ^{L}_{\geq 0}$ such that $x_1 + \dots + x_L = M$. The set $\calJ$ is defined by 
\begin{align}
  \label{eq:def:calJ}
  \calJ = \left\{ (x_1, \dots, x_L) \in \bbZ : \sumlL x_\ell = M, \ x_{\ell} \geq 0 \right\}.
\end{align}
The set $\calJ$ is finite, and the number of elements of $\calJ$ is $J$. Let $\calP_j$ be the set of all discrete probability distributions on the finite set $\{1, 2, \dots, J \}$:
\begin{align}
  \label{eq:def:calPJ}
  \calP_J &= \left\{ (p_1, \dots, p_{J}) \in \bbR_{\geq 0 }^J \ : \ \sumlL p_{j} = 1, p_{j} \geq 0 \right\}.
\end{align}
The probability distribution on $\calJ$ that can be expressed by multinomial mixtures of $M$ trials is included in the set $\calP_J$. Given that the probability model $p(x|w)$ and the true distribution $q(x)$ are both multinomial mixtures with $M$ trials, and that the corresponding distributions in the probability distribution on $\calJ$ are the $\pbar(x|w)$ and $\qbar(x)$, the average error function $K(w)$ is expressed as follows:
\begin{align}
  \label{eq:K(w) on calJ}
  K(w) &= \sumx \qbar(x) \log \frac{\qbar(x)}{\pbar(x|w)}.
\end{align}
Since the function $K(w)$ is the KL information between the true distribution $q(x)$ and the probability model $p(x|w)$, if $p(x|w) = q(x)$ then $K(w) = 0$, and otherwise $K(w) > 0$. Nowhere, we fix the positive constant $c > 0$ in the section \ref{subsubsec:the property of the multinomial distribution with one trial}, and define the subset $A$ of $\calP_J$ as follows:
\begin{align}
  \label{eq:def:A on calPJ}
  A = \left\{ p(x|w) : K(w) < \frac{c}{2} \right\}.
\end{align}
Since $q(x)$ are positive for all the points $x \in D$ from the assumption, by fixing the positive constant $\varepsilon > 0$ in the section \ref{subsubsec:the property of the multinomial distribution with one trial}, the subset $E$ of $\calP_J$ is defined as follows:
\begin{align}
  \label{eq:def:E on calPJ}
  E = \{ p(x|w) : \exists x \in \{ 1, \dots, J \} \mbox{ s.t. } \pbar(x|w) < \varepsilon \}.
\end{align}
The $E$ can be defined so that the sets $E$ and $A$ have no intersection. Then, the log empirical loss $L_n$ can be calculated as follows:
\begin{align}
  L_n &= \frac{1}{n} \sumin \log \frac{\qbar(X_i)}{\pbar(X_i)}\\
  &= \sumjJ \frac{n_j}{n} \log \frac{\qbar_j}{\pbar_j}\\
  \label{eq:calcLn}
  &= - \sumjJ \frac{n_j}{n} \log \frac{\pbar_j}{n_j/n} - \sumjJ \frac{n_j}{n} \log \frac{n_j/n}{\qbar_j}.
\end{align}
Since if $n \to \infty$ then $\frac{n_j}{n} \to \qbar_j$, The first term of the Eq.\eqref{eq:calcLn} converges to a certain constant, and the second term converges to 0. Thus, with probability at least $1 - \exp(-n\delta)$,
\begin{align}
  \frac{1}{n} \log \prodin \frac{q(X_i)}{p(X_i)} &\leq -\frac{c}{2}.
\end{align}
Therefore,
\begin{align}
  \prodin \frac{\qbar(X_i)}{\pbar(X_i)} &\leq \exp(-\frac{nc}{2}).
\end{align}
Let $w_0$ be a maximum likelihood estimator from multinomial mixtures with $M$ trials, and let $w_1$ be also a maximum likelihood estimator from all the discrete distributions on $\calP_J$. Since the set of distributions that can be expressed by multinomial mixtures with $M$ trials include in the set of distributions that can be expressed by all the discrete distributions on $\calP_J$, $\displaystyle \prodin p(X_i | w_0) \leq \prodin p(X_i | w_1)$ holds. Therefore, with probability at least $1 - \exp(-n\delta)$,
\begin{align}
  \label{eq:the upper bound MLE}
  \prodin \frac{p(X_i|w_0)}{q(X_i)} \leq \frac{\pbar(X_i|w_1)}{\qbar(X_i)} \leq \exp\left( - \frac{nc}{2} \right).
\end{align}

\subsubsection{the proof of lemma \ref{Lemma of Theta}}
\label{subsub:the proof of lemma of theta}
\begin{proof}
We fix the positive constants $\varepsilon, c, \delta > 0$ in the section \ref{subsubsec:the property of the multinomial distribution with one trial}, and we define the set $A$ in the Eq.\eqref{eq:def:A on calPJ}. The random variable $\Theta(X^n)$ is defined as follows:
\begin{align}
  \label{eq:def:Theta}
  \Theta(X^n) = 
  \begin{cases}
    1 & (r_n \in A)\\
    0 & (r_n \notin A)
  \end{cases},
\end{align}
where $r_n$ is the empirical distribution defined the Eq.\eqref{eq:def:empirical ditribution}. From the Eq.\eqref{eq:upper bound P(Theta=1)}, the probability of $\Theta(X^n) = 1$ is less than $(\exp(-n\delta))$. Using that true distribution $q(X)$ does not depend on the sample size $n$ and using the Eq.\eqref{eq:order log predictive},
\begin{align}
   \bbE\left[ \Theta(X^{n+1}) \log \frac{Z_1(X^{n}) + Z_2(X^{n})}{Z_1(X^{n+1})}\right]
   &= \bbE\left[ \Theta(X^{n+1}) \frac{q(X)}{p(x|X^n)}\right]\\
   &= O\left( \exp\left( -n\delta \right) \left(- M \log (Cn) \right) \right)\\
   \label{eq:lemma_eq1}
   &= O(\exp(-n)).
\end{align}
Moreover, by the Eq.\eqref{eq:the upper bound MLE},
\begin{align}
  \begin{split}
    &\bbE\left[ \left( 1 - \Theta(X^{n+1})\right) \log \frac{Z_1(X^{n}) + Z_2(X^{n})}{Z_1(X^{n+1}) + Z_2(X^{n+1})}  \right] \\
    &\quad \leq \bbE\left[  \log \frac{\frac{(\log n)^{m-1}}{n^{\lambda}}Z_0(X^n) + \exp(-nc/2) }{\frac{(\log (n+1))^{m-1}}{(n+1)^{\lambda}}Z_0\left(X^{n+1}\right) + \exp\left(-(n+1)c/2\right)}  \right]
  \end{split}\\
  \label{eq:lemma_eq2}
  &= \frac{\lambda}{n} - \frac{m-1}{n\log n} + \bbE\left[ \log \frac{Z_0(X^n)}{Z_0(X^{n+1})} \right] + o\left( \frac{1}{n\log n} \right).
\end{align}

By the reference \cite{watanabe2018mathematical}, 
\begin{align}
  \bbE\left[ \log \frac{Z_0(X^n)}{Z_0(X^{n+1})} \right] = o\left( \frac{1}{n\log n} \right).
\end{align}
Therefore, 
\begin{align}
  \begin{split}
    &\bbE\left[ \left( 1 - \Theta(X^{n+1})\right) \log \frac{Z_1(X^{n}) + Z_2(X^{n})}{Z_1(X^{n+1}) + Z_2(X^{n+1})}  \right] \\
  &\quad \leq \frac{\lambda}{n} - \frac{m-1}{n\log n} + o\left( \frac{1}{n\log n} \right).
  \end{split}
\end{align}
From the above, the lemma \ref{Lemma of Theta} is shown.
\end{proof}

  \subsection{properties of the general components}
    \label{subsec:properties_general_components}
    We prepare a lemma that holds for multinomial mixtures of general components.

\begin{lemma}
\label{lem:general multinomial mixtures}
Let $p(x|w)$ be a multinomial mixture with $H$ components:
\begin{align}
  \label{lem:eq:p(x|w)}
  p(x|w) = \sumhH a_h \rmMul_L(x|b_{h}) \ \ (w \in W).
\end{align}
Also, let the true distribution $q(x)$ be a multinomial mixture with $H^* \ (H \geq H^*)$ components:
\begin{align}
  \label{lem:eq:q(x)}
  q(x) = \sumhHs \ahs \rmMul_L(x| \bhs).
\end{align}
Let $K(w)$ be the mean error function determined by the probability model $p(x|w)$ and the true distribution $q(x)$. $K(w)$ has the same RLCT and the multiplicity as $K_1 (w)$ defined below:
\begin{align}
  \label{lem:eq:K1(w)}
	K_1(w) &= \sumxD \left\{ \sumhH \ah \left( \prodlLminus \bhl^{\xl} \right) - \sumhsHs \ahs \left( \prodlLminus (\bhls)^{\xl} \right) \right\}^2,
\end{align}
that is, $K(w) \sim K_1(w)$.
\end{lemma}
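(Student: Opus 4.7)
My plan rests on the general machinery of Lemma~\ref{lem:zeta} combined with the simplex constraint $b_{hL}=1-\sum_{\ell=1}^{L-1}b_{h\ell}$, which lets us eliminate the last coordinate of each component at the cost of a polynomial identity. First, item~\ref{lem:p(x|w)-q(x)} of Lemma~\ref{lem:zeta} replaces the KL form of $K(w)$ by
$
K'(w)=\sum_{x\in D}(p(x|w)-q(x))^{2}.
$
Writing $p(x|w)=f(x)P(x|w)$ with $f(x)=M!/\prod_{\ell}x_{\ell}!$ and $P(x|w)=\sum_{h}a_{h}\prod_{\ell=1}^{L}b_{h\ell}^{x_{\ell}}$, we obtain $K'(w)=\sum_{x}f(x)^{2}(P(x|w)-P(x|w^{*}))^{2}$. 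Since $f(x)$ is bounded above and below by positive constants on the finite set $D$, item~\ref{lem:enm:inequality} of Lemma~\ref{lem:zeta} gives $K'\sim K_{2}$ with
$K_{2}(w)=\sum_{x\in D}(P(x|w)-P(x|w^{*}))^{2}$.

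The core step is to relate $K_{2}$ to $K_{1}$. Substituting $b_{hL}=1-\sum_{\ell=1}^{L-1}b_{h\ell}$ into $b_{hL}^{x_{L}}$ and expanding by the multinomial theorem gives, for each $x\in D$,
$$
P(x|w)=\sum_{y\in\tilde D}T_{x,y}\,Q(y|w),\qquad Q(y|w)=\sum_{h}a_{h}\prod_{\ell=1}^{L-1}b_{h\ell}^{y_{\ell}},
$$
where $\tilde D=\{y\in\mathbb{Z}_{\geq 0}^{L-1}:|y|\leq M\}$ and the combinatorial constants $T_{x,y}$ (independent of $w$) vanish unless $y_{\ell}\geq x_{\ell}$ for every $\ell=1,\dots,L-1$, with $T_{x,x}=1$ on the diagonal. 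Under the bijection $D\leftrightarrow\tilde D$ given by dropping the last coordinate, $T$ is a square matrix that is triangular with respect to the total-degree grading $|y|$ and has ones on the diagonal, hence unipotent and invertible.

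Consequently, $\{P(x|w)-P(x|w^{*}):x\in D\}$ and $\{Q(y|w)-Q(y|w^{*}):y\in\tilde D\}$ generate the same ideal in the polynomial ring in the parameters. Since $K_{1}(w)=\sum_{y\in\tilde D}(Q(y|w)-Q(y|w^{*}))^{2}$ under that bijection, item~\ref{lem:ideal} of Lemma~\ref{lem:zeta} yields $K_{2}\sim K_{1}$, hence $K\sim K_{1}$. The only nontrivial point is the invertibility of the change-of-basis matrix $T$, which reduces to the triangularity observation above; every other step is a direct application of Lemma~\ref{lem:zeta}.
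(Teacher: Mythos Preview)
Your proof is correct and follows essentially the same route as the paper: reduce $K$ to the sum of squared differences via Lemma~\ref{lem:zeta}\eqref{lem:p(x|w)-q(x)}, strip the multinomial coefficient by Lemma~\ref{lem:zeta}\eqref{lem:enm:inequality}, expand $b_{hL}^{x_L}=(1-\sum_{\ell<L}b_{h\ell})^{x_L}$ to express each $P(x|w)-P(x|w^*)$ as a linear combination of the $Q(y|w)-Q(y|w^*)$, and invoke Lemma~\ref{lem:zeta}\eqref{lem:ideal}. Your packaging of that linear relation as a unipotent triangular matrix $T$ (since only $y\geq x$ componentwise appear and $T_{x,x}=1$) is slightly crisper than the paper's explicit binomial/multinomial computations, but the argument is the same in substance.
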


\begin{proof} From the lemma \ref{lem:zeta}, the mean error function $K(w)$ is equivalent to $K'(w)$, that is, the RLCT and the multiplicity are equal. Calculate $p(x|w) - q(x)$ as follows:
\begin{align}
  \label{lem:proof:p(x|w)-q(x):1}
  p(x|w) - q(x) &= \left\{ \sumhH \ah \frac{M!}{\prodlL \xl!} \prodlL \bhl^{\xl} - \sumhHs \ahs \frac{M!}{\prodlL \xl!}  (\bhls)^{\xl} \right\} \\
  \label{lem:proof:p(x|w)-q(x):2}
  &= \left( \frac{M!}{\prodlL \xl!} \right) \left\{ \sumhH \ah \prodlL \bhl^{\xl} - \sumhHs \ahs \prodlL (\bhls)^{\xl}  \right\}. 
\end{align}
Since $\displaystyle \frac{M!}{\prodlL \xl!}$ is greater than $0$ and bounded for any $x \in D$,
\begin{align}
  \label{lem:proof:eliminate constants}
  K(w) \sim \sumxD \left\{ \sumhH \ah \prodlL \bhl^{\xl} - \sumhHs \ahs \prodlL (\bhls)^{\xl}  \right\}^2.
\end{align}
Furthermore, since for each $h \in [H]$ and for each $h \in [\Hs]$ :  $\displaystyle \sumlL \bhl = 1$, $\sumlL \bhls = 1$, both $\bhL$ and $\bhL^*$ can be represented by
other $\bhl$ and $\bhls$ for each $h \in [H]$, 
\begin{align}
  \label{lem:proof:eliminate bL}
  \begin{split}
	  &\sumhH \ah \prodlL \bhl^{\xl} - \sumhHs \ahs \prodlL (\bhls)^{\xl} \\
      &\quad =  \sumhH \ah \left( \prodlLminus \bhl^{\xl} \right) \left( 1 - \sumlLminus \bhl \right)^{\xL} - \sumhHs \ahs \left( \prodlLminus (\bhls)^{\xl} \right) \left( 1 - \sumlLminus \bhls \right)^{\xL}.
  \end{split}
\end{align}
Here, by using the binomial theorem,
\begin{align}
  \label{lem:proof:using the binomial theorem:1}
  \left( 1 - \sumlLminus \bhl \right)^{\xL} &= \sum_{i=0}^{\xL} \binom{\xL}{i} \left( - \sumlLminus \bhl \right)^i \ 1^{\xL - i}\\
  \label{lem:proof:using the binomial theorem:2}
  &= 1 +  \sum_{i=1}^{\xL} \binom{\xL}{i} (-1)^{i} \left( \sumlLminus \bhl \right)^{i}.
\end{align}
Also
\begin{align}
  \label{lem:proof:using the binomial theorem:3}
  \left( 1 - \sumlLminus \bhls \right)^{\xL} &= 1 +  \sum_{i=1}^{\xL} \binom{\xL}{i} (-1)^{i} \left( \sumlLminus \bhls \right)^{i}.
\end{align}
Therefore, 
\begin{align}
  \label{lem:proof:using the binomial theorem:4}
  \begin{split}
	&\sumhH \ah \prodlL \bhl^{\xl} - \sumhHs \ahs \prodlL (\bhls)^{\xl} \\
    &\quad = \sumhH \ah \left( \prodlLminus \bhl^{\xl} \right) \left\{ 1 +  \sum_{i=1}^{\xL} \binom{\xL}{i} (-1)^{i} \left( \sumlLminus \bhl \right)^{i} \right\}\\
    &\quad -  \sumhHs \ahs \left( \prodlLminus (\bhls)^{\xl} \right) \left\{ 1 +  \sum_{i=1}^{\xL} \binom{\xL}{i} (-1)^{i} \left( \sumlLminus \bhls \right)^{i} \right\}
  \end{split}\\
  \label{lem:proof:using the binomial theorem:5}
  \begin{split}
    &= \sumhHs \ah \prodlLminus (\bhl)^{\xl} - \sumhHs \ahs \prodlLminus (\bhls)^{\xl} \\
    &\quad + \sumhH \ahs \left( \prodlLminus (\bhl)^{\xl} \right) \sum_{i=1}^{\xL} \binom{\xL}{i} (-1)^{i} \left( \sumlLminus \bhl \right)^{i}\\
    &\quad -  \sumhHs \ahs \left( \prodlLminus (\bhls)^{\xl} \right) \sum_{i=1}^{\xL} \binom{\xL}{i} (-1)^{i} \left( \sumlLminus \bhls \right)^{i}.
  \end{split}
\end{align}
For simplicity, for each $h \in [H]$ and $h \in [\Hs]$,  
we define $\Ah, \Ahs$ as follows:
\begin{align}
  \label{lem:proof:def:Ah}
	\Ah &= \ah \prodlLminus (\bhl)^{\xl} \ \ (h = 1, \dots, H),\\ 
  \label{lem:proof:def:Ahs}
    \Ahs &= \ahs \left( \prodlLminus (\bhls)^{\xl} \right) \ \ (h = 1, \dots, \Hs).
\end{align}
It follows that
\begin{align}
  \label{lem:proof:Ah_calc}
  \begin{split}
	&\sumhH \ah \prodlL \bhl^{\xl} - \sumhHs \ahs \prodlL (\bhls)^{\xl} \\
    &= \sumhH \Ah - \sumhHs \Ahs \\
    &\quad + \sumhH \Ah \sum_{i=1}^{\xL} \binom{\xL}{i} (-1)^{i} \left( \sumlLminus \bhl \right)^{i}  - \sumhHs \Ahs \sum_{i=1}^{\xL} \binom{\xL}{i} (-1)^{i} \left( \sumlLminus \bhls \right)^{i}.
  \end{split}
\end{align}
By using the multinomial theorem, 
\begin{align}
  \label{lem:proof:using the multinomial theorem:1}
  \left( \sumlLminus \bhl \right)^{i} &= \summul \frac{i!}{i_1! \dots i_{L-1}!} b_{h1}^{i_1} \dots b_{hL-1}^{i_{L-1}},\\
  \label{lem:proof:using the multinomial theorem:2}
  \left( \sumlLminus \bhls \right)^{i} &= \summul \frac{i!}{i_1! \dots i_{L-1}!} (\bs_{h1})^{i_1} \dots (\bs_{hL-1})^{i_{L-1}},
\end{align}
where
the summation $\displaystyle \summul$ shows the sum of all over
 non-negative integer sets $(i_1, \dots, i_{L-1})$ such that $i_{1} + \dots + i_{L-1} = i$. 
We apply Eqs.\eqref{lem:proof:using the multinomial theorem:1} and 
\eqref{lem:proof:using the multinomial theorem:2} to the Eq.\eqref{lem:proof:Ah_calc}. 
Let $C_{i_{\ell}}$ be
 $\displaystyle \frac{i!}{i_1! \dots i_{L-1}!}$. Then we obtain 
\begin{align}
  \label{lem:proof:calc1}
  \begin{split}
    &\sumhH \ah \prodlL \bhl^{\xl} - \sumhHs \ahs \prodlL (\bhls)^{\xl} \\
    &= \sumhH \Ah - \sumhHs \Ahs \\
    &\quad + \sumhH \Ah \sum_{i=1}^{\xL} \binom{\xL}{i} (-1)^{i}  \summul C_{i_{\ell}} b_{h1}^{i_1} \dots b_{hL-1}^{i_{L-1}}\\
    &\quad - \sumhHs \Ahs \sum_{i=1}^{\xL} \binom{\xL}{i} (-1)^{i} \summul C_{i_{\ell}} (\bs_{h1})^{i_1} \dots (\bs_{hL-1})^{i_{L-1}}
  \end{split}\\
  \label{lem:proof:calc2}
  \begin{split}
    &= \sumhH \Ah - \sumhHs \Ahs \\
    &\quad + \sum_{i=1}^{\xL}\binom{\xL}{i} \summul (-1)^{i}  C_{i_{\ell}} \binom{\xL}{i} \\
    &\quad \times \left\{ \sumhH \Ah b_{h1}^{i_1} \dots b_{hL-1}^{i_{L-1}} - \sumhHs \Ahs (\bs_{h1})^{i_1} \dots (\bs_{hL-1})^{i_{L-1}} \right\}.
  \end{split}
\end{align}
By using Eqs.(\ref{lem:proof:def:Ah}) and (\ref{lem:proof:def:Ahs}), 
\begin{align}
  \label{lem:proof:calc3}
  \begin{split}
    &\sumhH \ah \prodlL \bhl^{\xl} - \sumhHs \ahs \prodlL (\bhls)^{\xl} \\
    &= \sumhH \Ah - \sumhHs \Ahs \\
    & + \sum_{i=1}^{\xL} \binom{\xL}{i} \summul (-1)^{i}  C_{i_{\ell}} \binom{\xL}{i} \\
& \times
\left\{ \sumhH \ah b_{h1}^{x_1 + i_1} \dots b_{hL-1}^{x_{L-1} + i_{L-1}} 
 - \sumhHs \ahs (\bs_{h1})^{x_1 + i_1} \dots (\bs_{hL-1})^{x_{L-1} + i_{L-1}} \right\}.
  \end{split}
\end{align}
We introduce a polynomial $f_{L-1}(x_1, \dots, x_{L-1} ; w)$ defined by 
\begin{align}
  \label{lem:proof:def:f}
  f_{L-1}(x_1, \dots, x_{L-1} ; w) &= \sumhH \Ah - \sumhHs \Ahs,\\
  \label{eq:proof:def:f}
&= \sumhH \ah \prodlLminus (\bhl)^{\xl} - \sumhHs \ahs \left( \prodlLminus (\bhls)^{\xl} \right).
\end{align}
Then by Eq.(\ref{lem:proof:calc3}), it follows that 
\begin{align}
  \label{lem:proof:calc4}
  \begin{split}
    &\sumhH \ah \prodlL \bhl^{\xl} - \sumhHs \ahs \prodlL (\bhls)^{\xl} \\
    &\quad = f_{L-1}(x_1, \dots, x_{L-1} ; w) \\
&+ \sum_{i=1}^{\xL} \binom{\xL}{i} \summul (-1)^{i}  C_{i_{\ell}} \binom{\xL}{i} f_{L-1}(x_1+i_1 , \dots, x_{L-1}+i_{L-1} ; w).
  \end{split}
\end{align}
The second term of Eq.\eqref{lem:proof:calc4} can be expressed by the linear sum of the first term $ f_{L-1}(x_1, \dots, x_ {L-1};w) $. That is, in the second term, there is a constant $ C'(x_1, \dots, x_ {L-1}) $ that does not depend on parameters, 
\begin{align}
  \label{lem:proof:calc5}
  \begin{split}
    &\sum_{i=1}^{\xL} \binom{\xL}{i} \summul (-1)^{i}  C_{i_{\ell}} \binom{\xL}{i}
 f_{L-1}(x_1+i_1 , \dots, x_{L-1}+i_{L-1} ; w)\\
    &\quad = \sum_{x \in D} C'(x_1, \dots, x_{L-1}) f_{L-1}(x_1, \dots, x_{L-1} ; w).
  \end{split}
\end{align}
Therefore, the ideal generated from the set $\{ f_{L-1}(x_1, \dots, x_{L-1}) \}_{x \in D}$ and the ideal generated from the set $\left\{ f_{L-1}(x_1, \dots, x_{L-1}) + f_{L-1}(x_1+i_1, \dots, x_{L-1}+i_{L-1}) \right\}_{x \in D}$ are equal, so the function $K_1(w)$ is defined as follows:
\begin{align}
	K_1(w) &= \sumxD f_{L-1}(x_1, \dots, x_{L-1} ; w)^2 \\
& = \sumxD \left\{ \sumhH \ah \left( \prodlLminus (\bhl)^{\xl} \right) - \sumhHs \ahs \left( \prodlLminus (\bhls)^{\xl} \right) \right\}^2 .
  \label{eq:proof:def:K1}
\end{align}
From the lemma \ref{lem:zeta} \eqref{lem:ideal}, the two functions $K(w)$ and $K_1(w)$ are equivalent, that is, their RLCTs and multiplicities are equal.
\end{proof}

  \subsection{properties of the 2 components}
    \label{subsec:properties_two_components}
    So far, we have prepared the lemma \ref{lem:general multinomial mixtures}, which holds for multinomial mixtures with general components. Hereafter, we assume that the number of components of the multinomial mixtures of the probability model is 2 (i.e. $H = 2$) and that the true distribution is the multinomial distribution (i.e. $H^* = 1$). That is, the probability model $p(x|w)$ and the true distribution $q(x)$ are as follows:
\begin{align}
  \label{eq:assume:p(x|w)}
  p(x|w) &= a\rmMul_{L}(x|b) + (1-a)\rmMul_{L}(x|c), \ b, c \in B,  \\
  \label{eq:assume:q(x)}
  q(x) &= \rmMul_{L}(x|b^*), \ b^* \in B, \ \prodlL b^*_{\ell} \neq 0.
\end{align}
Then, the polynomial $f_L(x_1, \cdots, x_{L})$ defined by the Eq.\eqref{lem:proof:def:f} is expressed as follows:
\begin{align*}
  f_L(x_1, \cdots, x_{L}; w) = a \prod_{\ell=1}^{L}b_\ell^{x_\ell}  + (1-a) \prod_{\ell=1}^{L}c_\ell^{x_\ell} - \prod_{\ell=1}^L (b_\ell^*)^{x_\ell}.
\end{align*}

\begin{lemma}
\label{lem:3}
  For $j \in [2:L-1]$, the following holds:
  \begin{align}
    \label{eq:lem_f}
    \begin{split}
       &f_{L-1}(x_1, \cdots, x_j, \cdots,  x_{L-1}; w)\\
       &= (b_j + c_j) f_{L-1}(x_1, \cdots, x_j - 1, \cdots, x_{L-1} ; w) \\
&\quad- b_jc_j f_{L-1}(x_1, \cdots, x_j - 2, \cdots, x_{L-1} ; w)\\
       &\quad-  (b_j - b_j^*)(c_j - b_j^*)(b_j^*)^{x_j-2} \prodij (b_\ell^*)^{x_\ell},
    \end{split}  
  \end{align}
  where $[2:L-1]$ represents the set $\{ \ell \in \bbZ : 2 \leq \ell \leq L-1 \}$.
\end{lemma}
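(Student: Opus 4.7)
The plan is to prove the identity by a direct algebraic manipulation that exploits the fact that $b_j$ and $c_j$ are exactly the two roots of the quadratic $t^2 - (b_j+c_j)t + b_jc_j = 0$. Under the two-component assumption, the definition in Eq.\eqref{eq:proof:def:f} specializes to
\begin{align*}
f_{L-1}(x_1,\dots,x_{L-1};w) = a\prod_{\ell=1}^{L-1} b_\ell^{x_\ell} + (1-a)\prod_{\ell=1}^{L-1} c_\ell^{x_\ell} - \prod_{\ell=1}^{L-1}(b_\ell^*)^{x_\ell}.
\end{align*}
Writing $\prod_{\ell=1}^{L-1} b_\ell^{x_\ell} = b_j^{x_j}\prod_{\ell\neq j} b_\ell^{x_\ell}$ (and analogously for $c$ and $b^*$), I would factor out the part of each product not depending on the $j$-th coordinate, so that the lemma reduces to studying how powers $b_j^{x_j}$, $c_j^{x_j}$, $(b_j^*)^{x_j}$ behave under the second-order linear combination defined by the recurrence coefficients $1,\ -(b_j+c_j),\ b_jc_j$.

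The key observation is then immediate: for any value $t$,
\begin{align*}
t^{x_j} - (b_j+c_j)t^{x_j-1} + b_jc_j\,t^{x_j-2} = t^{x_j-2}\bigl(t-b_j\bigr)\bigl(t-c_j\bigr).
\end{align*}
Specializing $t = b_j$ or $t = c_j$ gives $0$, so both the $a\prod b_\ell^{x_\ell}$ and $(1-a)\prod c_\ell^{x_\ell}$ terms disappear from the combination $f_{L-1}(x_j) - (b_j+c_j)f_{L-1}(x_j-1) + b_jc_j f_{L-1}(x_j-2)$. Only the true-distribution term survives, contributing
\begin{align*}
-(b_j^*)^{x_j-2}(b_j^*-b_j)(b_j^*-c_j)\prod_{\ell\neq j}(b_\ell^*)^{x_\ell} = -(b_j-b_j^*)(c_j-b_j^*)(b_j^*)^{x_j-2}\prod_{\ell\neq j}(b_\ell^*)^{x_\ell},
\end{align*}
where I used that $(b_j^*-b_j)(b_j^*-c_j) = (b_j-b_j^*)(c_j-b_j^*)$.

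Solving for $f_{L-1}(x_1,\dots,x_j,\dots,x_{L-1};w)$ yields Eq.\eqref{eq:lem_f} exactly. There is no real obstacle here: the argument is a one-line algebraic identity applied termwise, and the hypothesis $j\in[2:L-1]$ is needed only to guarantee that the shifted arguments $x_j-1$ and $x_j-2$ are legitimate (so that the factors $(b_\ell^*)^{x_\ell}$ for $\ell\neq j$ carry over unchanged and the exponent $x_j-2$ on $b_j^*$ makes sense once we use $x_j\geq 2$, which is implicit from the indexing over $x\in D$). If one wishes to be thorough, a brief remark should explicitly check the edge cases $x_j = 0,1$, where the recurrence still holds because all three terms on the right can be evaluated directly and the product involving $(b_j^*)^{x_j-2}$ is interpreted via the polynomial identity $(b_j^*)^2 - (b_j+c_j)b_j^* + b_jc_j = (b_j-b_j^*)(c_j-b_j^*)$ multiplied by the appropriate remaining factors.
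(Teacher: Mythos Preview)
Your proposal is correct and, at bottom, carries out the same direct algebraic verification as the paper; the paper simply expands each of the three terms on the right of Eq.~\eqref{eq:lem_f} and sums, whereas you organize the same computation more cleanly via the quadratic identity $t^{x_j}-(b_j+c_j)t^{x_j-1}+b_jc_j\,t^{x_j-2}=t^{x_j-2}(t-b_j)(t-c_j)$, which makes the vanishing of the $b$- and $c$-terms immediate. One small remark: your aside about the role of the hypothesis $j\in[2{:}L-1]$ is slightly off---that constraint restricts the \emph{coordinate index} $j$, not the value $x_j$; the recurrence is really used for $x_j\ge 2$ (as in Lemma~\ref{lem:4}), so the exponents on the right-hand side are nonnegative, and no separate treatment of $x_j=0,1$ is required.
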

  
\begin{proof}
By calculating each of the three terms in Eq.(\ref{eq:lem_f}), 
\begin{align}
  \label{lem:proof:fst}
  \begin{split}
    & (b_j + c_j) f_{L-1}(x_1, \cdots, x_j - 1, \cdots, x_{L-1} ; w) \\
   &\quad= a \prodiLL b_\ell^{x_\ell} + a c_j b_j^{x_j-1} \prodij b_\ell^{x_\ell}   + (1-a) b_j c_j^{x_j-1} \prodij c_\ell^{x_\ell}\\
    &\quad + (1-a) \prodiLL c_\ell^{x_\ell} - (b_j + c_j) (b_j^*)^{x_j -1} \prodij (b_\ell^*)^{x_\ell},
  \end{split}\\
  \label{lem:proof:snd}
  \begin{split}
   &  b_jc_j f_{L-1}(x_1, \cdots, x_j - 2, \cdots, x_{L-1} ; w) \\
&\quad=  - a c_j b_j^{x_j - 1}  \prodij b_\ell^{x_\ell}  - (1-a) b_j c_j^{x_j-1} \prodij c_\ell^{x_\ell},
  \end{split}\\
  \label{lem:proof:trd}
  \begin{split}
  & (b_j - b_j^*)(c_j - b_j^*)(b_j^*)^{x_j-2} \prodij (b_\ell^*)^{x_\ell} \\
&\quad= -b_jc_j(b_j^*)^{x_j-2}\prodij(b_\ell^*)^{x_\ell} + b_j(b_j^*)^{x_j-1}\prodij(b_\ell^*)^{x_\ell}\\
    &\quad + c_j(b_j^*)^{x_j-1}\prodij(b_\ell)^{x_\ell} - \prodij(b_\ell^*)^{x_\ell}.
  \end{split}
\end{align}
Then
Eq.\eqref{eq:lem_f} is obtained by summing the above three Eqs.\eqref{lem:proof:fst}, \eqref{lem:proof:snd}, \eqref{lem:proof:trd}.
\end{proof}

\begin{lemma}
\label{lem:4}
Define the set $D'$ of $x$ as follows:
\begin{align}
  \label{eq:def:D'}
  D' = \bigl\{ (x_1, x_2, \cdots, x_{L-1}) \in \mathbb Z^{L-1} \ | \  x_\ell \in \{0, 1\} \bigr\}
\end{align}
Define the function $K_2(w)$ of parameter $w$ as follows:
\begin{align}
  \label{eq:def:K2}
  K_2(w) = \sum_{\ell=1}^{L-1}(b_\ell - b_\ell^*)^2(c_\ell - b_\ell^*)^2 + \sum_{x \in D'} f_{L-1}(x_1, \cdots, x_{L-1}; w)^2.
\end{align}
Then, $K(w) \sim K_2(w)$.
\end{lemma}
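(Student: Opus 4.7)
The plan is to reduce the claim $K \sim K_2$ to an equality of two polynomial ideals and then invoke part \eqref{lem:ideal} of Lemma \ref{lem:zeta}. Applying Lemma \ref{lem:general multinomial mixtures} with $H = 2$, $H^* = 1$ already gives $K(w) \sim K_1(w) = \sum_{x \in D} f_{L-1}(x_1, \ldots, x_{L-1}; w)^2$, so it suffices to establish $K_1 \sim K_2$. Let $I_1$ be the ideal generated by $\{f_{L-1}(x; w) : x \in D\}$ and let $I_2$ be the ideal generated by $\{f_{L-1}(y; w) : y \in D'\}$ together with $\{(b_\ell - b_\ell^*)(c_\ell - b_\ell^*) : 1 \leq \ell \leq L-1\}$. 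The goal is to show $I_1 = I_2$, after which Lemma \ref{lem:zeta}\eqref{lem:ideal} immediately yields $K_1 \sim K_2$.

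For $I_2 \subseteq I_1$, the generators $f_{L-1}(y; w)$ with $y \in D'$ are automatically in $I_1$ because each binary vector $y$ extends to a valid $x \in D$ by setting $x_L = M - \sum_{\ell < L} y_\ell$. To exhibit the product $(b_\ell - b_\ell^*)(c_\ell - b_\ell^*)$ inside $I_1$, I would specialize Lemma \ref{lem:3} with $j = \ell$ and $x = 2\vec{e}_\ell$. The crucial observation is that $f_{L-1}(0, \ldots, 0; w) = a + (1-a) - 1 = 0$, so the recurrence collapses to
\begin{align*}
  (b_\ell - b_\ell^*)(c_\ell - b_\ell^*) \;=\; (b_\ell + c_\ell)\, f_{L-1}(\vec{e}_\ell; w) - f_{L-1}(2\vec{e}_\ell; w),
\end{align*}
which is manifestly in $I_1$.

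For $I_1 \subseteq I_2$, I would induct on $|x| := \sum_\ell x_\ell$. The base case consists of $x$ with all $x_\ell \in \{0,1\}$, i.e.\ $x \in D'$, trivially in $I_2$. For the inductive step, if some coordinate satisfies $x_j \geq 2$, Lemma \ref{lem:3} rewrites $f_{L-1}(x; w)$ as a linear combination of $f_{L-1}(x - \vec{e}_j; w)$ and $f_{L-1}(x - 2\vec{e}_j; w)$ (both of strictly smaller $|x|$, hence in $I_2$ by induction) plus a bounded monomial multiple of $(b_j - b_j^*)(c_j - b_j^*)$, which lies in $I_2$ by definition. Iterating reduces every coordinate to $\{0, 1\}$, completing the inclusion.

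The main obstacle I anticipate is the index range in the inductive step: Lemma \ref{lem:3} is stated only for $j \in [2:L-1]$, whereas the reduction has to handle $j = 1$ as well. I would either verify that the algebraic manipulation in the proof of Lemma \ref{lem:3} goes through verbatim for $j = 1$, or argue by the permutation symmetry of $f_{L-1}$ in the coordinates $1, \ldots, L-1$. A minor bookkeeping check is that $\vec{e}_\ell$ and $2\vec{e}_\ell$ project from genuine points of $D$, which needs $M \geq 2$; this is built into the multinomial mixture setup. Once these points are settled, $I_1 = I_2$, and Lemma \ref{lem:zeta}\eqref{lem:ideal} delivers $K_1 \sim K_2$ and hence $K \sim K_2$.
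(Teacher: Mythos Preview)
Your proposal is correct and follows the same approach as the paper: use Lemma~\ref{lem:3} inductively to reduce every $f_{L-1}(x;w)$ to a combination of the binary-indexed $f_{L-1}(y;w)$ with $y\in D'$ and the products $(b_\ell-b_\ell^*)(c_\ell-b_\ell^*)$, then invoke Lemma~\ref{lem:zeta}\eqref{lem:ideal}. Your write-up is in fact more thorough than the paper's: you explicitly verify the reverse inclusion $I_2\subseteq I_1$ by specializing Lemma~\ref{lem:3} at $x=2\vec e_\ell$ (using $f_{L-1}(0;w)=0$) to exhibit $(b_\ell-b_\ell^*)(c_\ell-b_\ell^*)\in I_1$, whereas the paper simply asserts equality of the two ideals, and you correctly flag the $j=1$ case and the $M\geq 2$ requirement, both of which the paper leaves implicit.
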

\begin{proof}
By using lemma \ref{lem:3} inductively, $f_{L-1}(x_1, \cdots,  x_{L-1}; w)$ can be expressed using
\begin{itemize}
	\item $f_{L-1}(1, 0, 0, \cdots, 0; w),  f(0, 1, 0, \cdots, 0; w), \cdots$
	\item $f_{L-1}(1, 1, 0, \cdots, 0; w),  f(1, 0, 1, \cdots, 0; w), \cdots$
	\item $\cdots$
	\item $f_{L-1}(1, 1, \cdots, 1; w)$
    \item $(b_\ell - b_\ell^*)(c_\ell - b_\ell^*).$
\end{itemize}
Since the ideal generated from $\{ f_{L-1}(x_1, \cdots, x_{L-1}; w) \}_{x \in D} \cup \{  (b_i-b_i^*)^2(c_i - b_i^*)^2 \}_{\ell=1}^{L}$ and the ideal generated from $\{ f_{L-1}(x_1, \cdots, x_{L-1}; w) \}_{x \in D'} \cup \{  (b_i-b_i^*)^2(c_i - b_i^*)^2 \}_{\ell=1}^{L}$ are equal, lemma \ref{lem:4} can be applied. Thus, $K(w) \sim K_2(w)$.
\end{proof}

  \subsection{Proof of the main theorem}
    \label{subsec:Proof of the main theorem}
    Let us prove Theorem.\ref{maintheorem}.

\begin{proof} (Proof of Theorem.\ref{maintheorem})
 From lemma \ref{lem:4}, to clarify the RLCT and multiplicity of the multinomial mixtures with two components, we calculate the largest pole of the zeta function determined by $K_2(w)$ and $\varphi(w)$. 
\begin{align}
  \label{eq:calc_K2}
  \begin{split}
    K_2(w) &= \sumiLL (b_\ell - b_\ell^*)^2(c_\ell - b_\ell^*)^2 \\
    &\quad + \sum_{\ell_1 \in \{1, \cdots L-1 \}} (ab_{\ell_1} + (1-a)c_{\ell_1} - b^*_{\ell_1})^2\\
    &\quad + \sum_{\ell_1, \ell_2 \in \{1, \cdots, L-1 \}} (ab_{\ell_1}b_{\ell_2} + (1-a)c_{\ell_1}c_{\ell_2} - b^*_{\ell_1}b^*_{\ell_2})^2\\
    &\quad + \dots\\
    &\quad + \sum_{\ell_1, \cdots, \ell_{L-1} \in  \{1, \cdots, L-1 \}} \Bigl(a \prod_{k=1}^{L-1}b_{\ell_k} + (1-a) \prod_{k=1}^{L-1}c_{\ell_k} - \prod_{k=1}^{L-1} b^*_{\ell_k} \Bigr)^2,
  \end{split}
\end{align}
where the summation $\displaystyle \sum_{\ell_1, \ell_2, \dots, \ell_i \in \{1, \cdots, L-1 \}}$ represents the sum of the sets $\{ (\ell_1, \dots, \ell_i) \in \{1, \dots, L-1 \}^i :  j \neq j' \Rightarrow \ell_j \neq \ell_{j'} \ (\forall j, j' \in [i])  \}$. 
Let us define a map $\Phi_1: u \mapsto w$, where  for  each $\ell \in [L-1]$, 
\begin{align}
  \label{eq:def:Phi1}
  \begin{cases}
    B_{\ell} = b_{\ell}^* \\
    \beta_{\ell} = b_\ell - B_{\ell}\\
    \gamma_{\ell} = c_\ell - B_{\ell}
 \end{cases}.
\end{align}
The parameter $u$ consists of $(a, \beta, \gamma)$, where $\beta = (\beta_1, \dots, \beta_{L-1}),\ \gamma = (\gamma_1, \dots, \gamma_{L-1})$. Based on the map,
\begin{align}
  \label{eq:calc_K2(Phi1)}
  \begin{split}
    &K_2(\Phi_1(u))\\
    &= \sum_{\ell=1}^{L-1} \beta_\ell^2 \gamma_\ell^2\\
    &\quad + \sum_{\ell_1 \in \{1, \cdots L-1 \}} (a \beta_{\ell_1} + (1-a) \gamma_{\ell_1})^2\\
    &\quad + \sum_{\ell_1, \ell_2 \in \{1, \cdots, L-1 \}}  \Bigl\{ a\beta_{\ell_1}\beta_{\ell_2} + (1-a)\gamma_{\ell_1}\gamma_{\ell_2} \\
&\quad + B_{\ell_2} \{a\beta_{\ell_1} +(1-a) \gamma_{\ell_1}\}
+ B_{\ell_1}\{a\beta_{\ell_2} + (1-a)\gamma_{\ell_2}\} \Bigr\}^2\\
    &\quad +  \dots \\
    &\quad + \sum_{\ell_1, \cdots, \ell_{L-1} \in  \{1, \cdots, L-1 \}} \Bigl\{a\prod_{k=1}^{L-1} \beta_{\ell_k} + (1-a)\prod_{k=1}^{L-1}\gamma_{\ell_k} \\
 & \quad + \cdots + \sum_{j=1}^{L-1}\bigl(\prod_{k \neq j}^{L-1} B_{\ell_{k}} \bigr)(a \beta_j + (1-a)\gamma_j) \Bigr\}^2.
  \end{split}
\end{align}
From the symmetry of the parameters, we can restrict the integration range for $a$ as $0 \leq a \leq \frac{1}{2}$
without loss of generality. 
Let us define a map $\Phi_2 : v \to u$, where 
\begin{align}
  \label{eq:def:Phi2}
  \delta_{\ell} = a\beta_{\ell} + (1-a)\gamma_{\ell} \ \ (\ell = 1, \cdots, L-1).
\end{align}
The parameter $v$ consists of $(a, \beta, \delta)$, where $\delta = (\delta_1, \dots, \delta_{L-1})$.
Since we consider the range $\frac{1}{2} \leq 1-a \leq 1$, the Jacobian determinant of this transform is not equal to zero. Therefore, neither the maximum pole nor its order of the zeta function changes. 
We can obtain that
\begin{align}
  \label{eq:calc_K2(Phi2Phi1)1}
  \begin{split}        
    &K_2(\Phi_2(\Phi_1(v)))\\
    &= \sum_{\ell=1}^{L-1} \delta_{\ell}^2
+ \sum_{\ell=1}^{L-1} \beta_\ell^2 \gamma_\ell^2\\
    &\quad + \sum_{\ell_1, \ell_2 \in \{1, \cdots, L-1 \}}  \Bigl\{ a\beta_{\ell_1}\beta_{\ell_2} + (1-a)\gamma_{\ell_1}\gamma_{\ell_2} \\
& \quad+ B_{\ell_2} \{a\beta_{\ell_1} +(1-a) \gamma_{\ell_1}\} 
 + B_{\ell_1}\{a\beta_{\ell_2} + (1-a)\gamma_{\ell_2}\} \Bigr\}^2\\
    &\quad +  \dots \\
    &\quad + \sum_{\ell_1, \cdots, \ell_{L-1} \in  \{1, \cdots, L-1 \}} \Bigl\{a\prod_{k=1}^{L-1} \beta_{\ell_k} + (1-a)\prod_{k=1}^{L-1}\gamma_{\ell_k} \\
&\quad + \cdots + \sum_{j=1}^{L-1}\bigl(\prod_{k \neq j}^{L-1} B_{\ell_{k}} \bigr)(a \beta_j + (1-a)\gamma_j) \Bigr\}^2.
  \end{split}
\end{align}
Here, we eliminate $\gamma$ by using $\displaystyle \gamma_{\ell} = \frac{\delta_{\ell} - a\beta_{\ell}}{1-a}$,
\begin{align}
  \label{eq:eliminate gamma1}
  \sum_{\ell=1}^{L-1} \beta_\ell^2 \gamma_\ell^2 &= \sum_{\ell=1}^{L-1} \beta_{\ell}^2  \Bigl( \frac{\delta_{\ell} - a\beta_{\ell}}{1-a} \Bigr)^2\\
  \label{eq:eliminate gamma2}
  &= \frac{1}{(1-a)^2} \sum_{\ell=1}^{L-1} \beta_{\ell}^2(\delta_{\ell} - a\beta_{\ell} )^2.
\end{align}
Since $\frac{1}{2} \leq 1-a \leq 1$ and lemma \ref{lem:zeta}\eqref{lem:bounded},
\begin{align}
  \label{eq:calc_K2(Phi2Phi1)2}
  \sum_{\ell=1}^{L-1} \delta_{\ell}^2 + \sum_{\ell=1}^{L-1} \beta_\ell^2 \gamma_\ell^2  &= \sum_{\ell=1}^{L-1} \delta_{\ell}^2 + \frac{1}{(1-a)^2} \sum_{\ell=1}^{L-1} \beta_{\ell}^2(\delta_{\ell} - a\beta_{\ell} )^2\\ 
  \label{eq:calc_K2(Phi2Phi1)2.5}
  &\sim \sum_{\ell=1}^{L-1} \delta_{\ell}^2  + \sum_{\ell=1}^{L-1}  \beta_{\ell}^2 (-a\beta_{\ell})^2\\
  \label{eq:calc_K2(Phi2Phi1)3}
  &=  \sum_{\ell=1}^{L-1} \delta_{\ell}^2 +  \sum_{\ell=1}^{L-1}  a^2\beta_{\ell}^4.
\end{align}
Moreover, 
\begin{align}
  \label{eq:calc_K2(Phi2Phi1)4}
  \begin{split}
    &\sum_{\ell=1}^{L-1} \delta_{\ell}^2 +  \sum_{\ell=1}^{L-1}  a^2\beta_{\ell}^4\\
    &\quad + \sum_{\ell_1, \ell_2 \in \{1, \cdots, L-1 \}}  \Bigl\{ a\beta_{\ell_1}\beta_{\ell_2} + (1-a)\gamma_{\ell_1}\gamma_{\ell_2} \\
&\quad+ B_{\ell_2} \{a\beta_{\ell_1}
+(1-a) \gamma_{\ell_1}\} + B_{\ell_1}\{a\beta_{\ell_2} + (1-a)\gamma_{\ell_2}\} \Bigr\}^2
  \end{split}\\
  \label{eq:calc_K2(Phi2Phi1)5}
    &= \sum_{\ell=1}^{L-1} \delta_{\ell}^2 +  \sum_{\ell=1}^{L-1}  a^2\beta_{\ell}^4 +  \sum_{\ell_1, \ell_2 \in \{1, \cdots, L-1 \}}  \Bigl\{ a\beta_{\ell_1}\beta_{\ell_2} \\
&\quad+ (1-a) \frac{\delta_{\ell_1} - a\beta_{\ell_1}}{1-a} \frac{\delta_{\ell_2} - a\beta_{\ell_2}}{1-a} + B_{\ell_2} \delta_{\ell_1} + B_{\ell_1} \delta_{\ell_2} \Bigr\}^2\\
    \label{eq:calc_K2(Phi2Phi1)6}
    &\sim \sum_{\ell=1}^{L-1} \delta_{\ell}^2 +  \sum_{\ell=1}^{L-1}  a^2\beta_{\ell}^4 +  \sum_{\ell_1, \ell_2 \in \{1, \cdots, L-1 \}}  \Bigl\{ a\beta_{\ell_1}\beta_{\ell_2} + \left( a\beta_{\ell_1} \right) \left( a\beta_{\ell_2} \right)\Bigr\}^2\\
    \label{eq:calc_K2(Phi2Phi1)7}
    &= \sum_{\ell=1}^{L-1} \delta_{\ell}^2 +  \sum_{\ell=1}^{L-1}  a^2\beta_{\ell}^4 + \sum_{\ell_1, \ell_2 \in \{1, \cdots, L-1 \}} (a\beta_{\ell_1}\beta_{\ell_2})^2(1+a)^2\\
    \label{eq:calc_K2(Phi2Phi1)8}
    &\sim \sum_{\ell=1}^{L-1} \delta_{\ell}^2 +  \sum_{\ell=1}^{L-1}  a^2\beta_{\ell}^4 + \sum_{\ell_1, \ell_2 \in \{1, \cdots, L-1 \}} a^2\beta_{\ell_1}^2\beta_{\ell_2}^2.
\end{align}
By recursively applying lemma \ref{lem:zeta}\eqref{lem:bounded}, we can obtain that 
\begin{align}
  \label{eq:calc_K2(Phi2Phi1)9}
  K_2(\Phi_2(\Phi_1(v))) &\sim \sum_{\ell=1}^{L-1} \delta_{\ell}^2 + \sum_{\ell=1}^{L-1} a^2\beta_\ell^4 + \sum_{\ell_1, \ell_2 \in \{1, \cdots, L-1 \}} a^2\beta_{\ell_1}^2\beta_{\ell_2}^2.
\end{align}
Here, for all parameters $v$, 
\begin{align}
  \label{eq:calc_K2(Phi2Phi1)10}
  K_2(\Phi_2(\Phi_1(v))) \geq  \sum_{\ell=1}^{L-1} \delta_{\ell}^2 + \sum_{\ell=1}^{L-1} a^2\beta_\ell^4.
\end{align}
Also, since $\beta_{\ell_1}^2, \beta_{\ell_2}^2 \geq 0$ and the relation between the arithmetic mean and the geometric mean, 
\begin{align}
  \label{eq:calc_K2(Phi2Phi1)11}
  \sum_{\ell=1}^{L-1} \beta_\ell^4 + \sum_{\ell_1, \ell_2 \in \{1, \cdots, L-1 \}} \beta_{\ell_1}^2\beta_{\ell_2}^2 \ \leq \ \sum_{\ell=1}^{L-1} \beta_\ell^4 + \frac{1}{2}  \sum_{\ell_1, \ell_2 \in \{1, \cdots, L-1 \}}(\beta_{\ell_1}^4 + \beta_{\ell_2}^4).
\end{align}
Therefore, there exists a constant $k \geq 1$ that does not depend on $v$, and
\begin{align}
  \label{eq:calc_K2(Phi2Phi1)12}
  \sum_{\ell=1}^{L-1} \beta_\ell^4 + \sum_{\ell_1, \ell_2 \in \{1, \cdots, L-1 \}} \beta_{\ell_1}^2\beta_{\ell_2}^2 \leq k\sum_{\ell=1}^{L-1} \beta_\ell^4.
\end{align}    
By Eqs. \eqref{eq:calc_K2(Phi2Phi1)10}, \eqref{eq:calc_K2(Phi2Phi1)12},
\begin{align}
  \label{eq:calc_K2(Phi2Phi1)13}
  \sum_{\ell=1}^{L-1} \delta_{\ell}^2 + \sum_{\ell=1}^{L-1} a^2\beta_\ell^4 \leq K_2(\Phi_2(\Phi_1(v))) \leq k\Bigl( \sum_{\ell=1}^{L-1} \delta_{\ell}^2 + \sum_{\ell=1}^{L-1} a^2\beta_\ell^4 \Bigr).
\end{align}
Thus, let us define $K_3(v)$:
\begin{align}
  \label{eq:def:K3}
  K_3(v) = \sum_{\ell=1}^{L-1} \delta_{\ell}^2 + \sum_{\ell=1}^{L-1} a^2\beta_\ell^4,
\end{align}
from the lemma \ref{lem:zeta}\eqref{lem:enm:inequality} and Eq.\eqref{eq:calc_K2(Phi2Phi1)13},
\begin{align}
  \label{eq:K3 sim K2}
  K_3(v) \sim K_2(\Phi_2(\Phi_1(v))).
\end{align}
From Eq.\eqref{eq:K3 sim K2}, the main theorem can be derived by finding the maximum pole and its order of the zeta function determined from $K_3(v)$ and $\varphi(\Phi_2(\Phi_1(v)))$. 
If the prior distribution $\varphi(a)$ of mixture ratio $a$ is greater than $0$ and bounded, from the lemma \ref{lem:zeta}\eqref{lem:sum}\eqref{lem:product}, 
\begin{align}
  \lambda(K_3, \varphi) &= \sum_{\ell=1}^{L-1}\lambda(\delta_{\ell}^2) + \min \Bigl( \lambda(a^2),\ \sum_{\ell=1}^{L-1}\lambda(\beta_{\ell}^4) \Bigr)\\
  &= \frac{L-1}{2} + \min \Bigl(\frac{1}{2},\ \frac{L-1}{4} \Bigr).
\end{align}
Here, $\displaystyle \frac{1}{2} = \frac{L-1}{4}$ holds only when $L = 3$, and the equal sign does not hold in other cases, so the multiplicity is $m = 2$ only when $L = 3$ and $ m = 1 $ otherwise. Therefore, the theorem \ref{maintheorem}\eqref{maintheorem:enm1} was shown.

Furthermore, consider the case in the theorem \ref{maintheorem}\eqref{maintheorem:enm2}, that is, the case where the prior distribution of the mixing ratio $a$ follows the Dirichlet distribution with $\alpha > 0$ as the hyperparameter. 
Using that the prior distribution of $a$ is $\varphi(a) \propto a^{\alpha-1}(1-a)^{\alpha-1}$, and the prior distribution of other parameters is greater than $0$ and bounded,
\begin{align}
  \lambda(K_3, \varphi) &= \sum_{\ell=1}^{L-1}\lambda(\delta_{\ell}^2) + \min \Bigl(\lambda(a^2, a^{\alpha-1}), \ \sum_{\ell=1}^{L-1}\lambda(\beta_{\ell}^{4}) \Bigr)\\
        &= \frac{L-1}{2} + \min \Bigl( \frac{\alpha}{2},\ \frac{1}{4} \Bigr).
\end{align}
Therefore, we completed Theorem \ref{maintheorem}\eqref{maintheorem:enm2}.
\end{proof}

\section{Phase transition due to prior distribution hyperparameters}
  \label{sec:Phase transition due to prior distribution hyperparameters}
  In Bayesian statistics, if a prior distribution $\varphi(w ;\theta)$ has a hyperparameter $\theta$ and a posterior distribution for a sufficiently large $n$ changes drastically at $\theta = \theta_c$, then it is said that the posterior distribution has a \textbf{phase transition}, and $\theta_c$ is called a \textbf{critical point} \cite{watanabe2018mathematical}.

In the case of the theorem \ref{maintheorem}\eqref{maintheorem:enm2}, the prior distribution of the mixed ratio $a$ of the  multinomial mixture is assumed to be the Dirichlet distribution with the hyperparamater $\alpha$, and the asymptotic free energy $F_n(\alpha)$ is given by
\begin{align}
  \label{eq:phase_transition1}
  \bbE[F_n(\alpha) ] \simeq \begin{cases}
    nS + \frac{\alpha}{2}\log n & (\alpha < \frac{L-1}{2}) \\
    nS + \frac{3(L-1)}{4}\log n - \log \log n & (\alpha = \frac{L-1}{2})\\
    nS + \frac{3(L-1)}{4}\log n  & (\alpha > \frac{L-1}{2})\
  \end{cases}.
\end{align}

From Eq.\eqref{eq:phase_transition1}, at $\alpha_c = \frac{L-1}{2}$, $\bbE[F_n(\alpha)]$ is not differentiable, so $\alpha_c$ is the phase transition point. 
If there is a phase transition point, the support of the posterior distribution significantly changes between two phases which greatly affects the result of statistical inference.

\section{Conclusions}
  \label{sec:Conclusions}
  In this paper, we derived the real log canonical threshold and multiplicity when the probability model and the prior are a multinomial mixture with two components and Dirichlet distribution respectively, and the true distribution is a multinomial distribution. The asymptotic behaviors of the free energy and the generalization error were clarified. One of future works is to find the RLCTs and multiplicities of the multinomial mixtures with the general number of components.

\clearpage
\bibliographystyle{junsrt}
\bibliography{references}

\begin{thebibliography}{10}

\bibitem{watanabe2002Prototyping}
Watanabe Takeshi and Einoshin Suzuki.
\newblock Prototyping abnormal medical test values in hepatitis data with
  mixture multinomial distribution estimate.
\newblock {\em ICS}, Vol. 2002, No. 45 (2002-ICS-128), pp. 49--54, 2002.

\bibitem{masada2007clusterning}
Masada Tomonari, Takasu Atsuhiro, and Adachi Jun.
\newblock Clustering for name disambiguation in author citations.
\newblock {\em DBSJ Letters}, Vol.~6, No.~1, 2007.

\bibitem{masada2007clustering}
Tomonari Masada, Senya Kiyasu, and Sueharu Miyahara.
\newblock Clustering images with multinomial mixture models.
\newblock In {\em International Symposium on Advanced Intelligent Systems},
  2007.

\bibitem{yamazaki2002resolution}
K~Yamazaki and Sumio Watanabe.
\newblock Resolution of singularities in mixture models and its stochastic
  complexity.
\newblock In {\em Proceedings of the 9th International Conference on Neural
  Information Processing, 2002. ICONIP'02.}, Vol.~3, pp. 1355--1359. IEEE,
  2002.

\bibitem{akaike1974new}
Hirotugu Akaike.
\newblock A new look at the statistical model identification.
\newblock {\em IEEE transactions on automatic control}, Vol.~19, No.~6, pp.
  716--723, 1974.

\bibitem{schwarz1978estimating}
Gideon Schwarz.
\newblock Estimating the dimension of a model.
\newblock {\em The annals of statistics}, pp. 461--464, 1978.

\bibitem{spiegelhalter2002bayesian}
David~J Spiegelhalter, Nicola~G Best, Bradley~P Carlin, and Angelika Van
  Der~Linde.
\newblock Bayesian measures of model complexity and fit.
\newblock {\em Journal of the royal statistical society: Series b (statistical
  methodology)}, Vol.~64, No.~4, pp. 583--639, 2002.

\bibitem{watanabe2001algebraicA}
Sumio Watanabe.
\newblock {Algebraic Analysis for Nonidentifiable Learning Machines}.
\newblock {\em Neural Computation}, Vol.~13, No.~4, pp. 899--933, 04 2001.

\bibitem{watanabe2018mathematical}
Sumio Watanabe.
\newblock {\em Mathematical theory of Bayesian statistics}.
\newblock CRC Press, 2018.

\bibitem{aoyagi2005stochastic}
Miki Aoyagi and Sumio Watanabe.
\newblock Stochastic complexities of reduced rank regression in bayesian
  estimation.
\newblock {\em Neural Networks}, Vol.~18, No.~7, pp. 924--933, 2005.

\bibitem{sato2019bayesian}
Kenichiro Sato and Sumio Watanabe.
\newblock Bayesian generalization error of poisson mixture and simplex
  vandermonde matrix type singularity.
\newblock {\em arXiv preprint arXiv:1912.13289}, 2019.

\bibitem{yamazaki2013comparing}
Keisuke Yamazaki and Daisuke Kaji.
\newblock Comparing two bayes methods based on the free energy functions in
  bernoulli mixtures.
\newblock {\em Neural networks}, Vol.~44, pp. 36--43, 2013.

\bibitem{hayashi2021exact}
Naoki Hayashi.
\newblock The exact asymptotic form of bayesian generalization error in latent
  dirichlet allocation.
\newblock {\em Neural Networks}, Vol. 137, pp. 127--137, 2021.

\bibitem{nagata2008asymptotic}
Kenji Nagata and Sumio Watanabe.
\newblock Asymptotic behavior of exchange ratio in exchange monte carlo method.
\newblock {\em Neural Networks}, Vol.~21, No.~7, pp. 980--988, 2008.

\bibitem{drton2017bayesian}
Mathias Drton and Martyn Plummer.
\newblock A bayesian information criterion for singular models.
\newblock {\em Journal of the Royal Statistical Society: Series B (Statistical
  Methodology)}, Vol.~79, No.~2, pp. 323--380, 2017.

\bibitem{watanabe2009algebraic}
Sumio Watanabe.
\newblock {\em Algebraic geometry and statistical learning theory}.
\newblock No.~25 in Cambridge Monographs on Applied and Computational
  Mathematics. Cambridge university press, 2009.

\bibitem{watanabe2001algebraicB}
Sumio Watanabe.
\newblock Algebraic geometrical methods for hierarchical learning machines.
\newblock {\em Neural Networks}, Vol.~14, No.~8, pp. 1049--1060, 2001.

\bibitem{matsuda2008weighted}
Takeshi Matsuda and Sumio Watanabe.
\newblock Weighted blowups of kullback information and application to
  multinomial distributions.
\newblock {\em IEICE Proceedings Series}, Vol.~42, No. B2L-C2, 2008.

\bibitem{csiszar2006simple}
Imre Csisz{\'a}r.
\newblock A simple proof of sanov’s theorem.
\newblock {\em Bulletin of the Brazilian Mathematical Society}, Vol.~37, No.~4,
  pp. 453--459, 2006.

\bibitem{bishop2006pattern}
Christopher~M Bishop.
\newblock Pattern recognition.
\newblock {\em Machine learning}, Vol. 128, No.~9, 2006.

\bibitem{watanabe2020asymptotic}
Takumi Watanabe and Sumio Watanabe.
\newblock Asymptotic behavior of bayesian generalization error in multinomial
  mixtures.
\newblock {\em IEICE Technical Report; IEICE Tech. Rep.}, Vol. 119, No. 360,
  pp. 1--8, 2020.

\bibitem{YAMAZAKI20031029}
Keisuke Yamazaki and Sumio Watanabe.
\newblock Singularities in mixture models and upper bounds of stochastic
  complexity.
\newblock {\em Neural Networks}, Vol.~16, No.~7, pp. 1029--1038, 2003.

\bibitem{aoyagi2010bayesian}
Miki Aoyagi.
\newblock A bayesian learning coefficient of generalization error and
  vandermonde matrix-type singularities.
\newblock Communications in Statistics—Theory and Methods, Vol.~39, No.~15,
  pp. 2667--2687, 2010.

\bibitem{hironaka1964resolution}
Heisuke Hironaka.
\newblock Resolution of singularities of an algebraic variety over a field of
  characteristic zero: Ii.
\newblock {\em Annals of Mathematics}, pp. 205--326, 1964.

\end{thebibliography}
\nocite{*}     

\end{document}